\newcommand{\bc}[1]{\left\{{#1}\right\}}
\newcommand{\br}[1]{\left({#1}\right)}
\newcommand{\bs}[1]{\left[{#1}\right]}
\newcommand{\expectover}[2]{\mathbb{E}_{#1}\!\bs{#2}}
\newtheorem{theorem}{Theorem}
\newtheorem*{theorem*}{Theorem}
\title{Adversarially Robust Decision Transformer}
\title{Adversarially Robust Decision Transformer}
\author{%
  \kern1em\kern1em Xiaohang Tang\thanks{Equal Contribution.} \\
  \kern1em\kern1em University College London \\ \kern1em\kern1em \texttt{xiaohang.tang.20@ucl.ac.uk} \\
  \And
  Afonso Marques\footnotemark[1] \\
  University College London \\ \texttt{afonso.marques.22@ucl.ac.uk} \\
  \AND
  Parameswaran Kamalaruban\footnotemark[1] \\
  Featurespace \\ \texttt{kamal.parameswaran@featurespace.co.uk} \\
  \And
  \kern-1em\kern-1em Ilija Bogunovic \\ 
  \kern-1em\kern-1em University College London \\ 
  \kern-1em\kern-1em \texttt{i.bogunovic@ucl.ac.uk}
}
\begin{document}

\maketitle

\begin{abstract}
Decision Transformer (DT), as one of the representative Reinforcement Learning via Supervised Learning (RvS) methods, has achieved strong performance in offline learning tasks by leveraging the powerful Transformer architecture for sequential decision-making. However, in adversarial environments, these methods can be non-robust, since the return is dependent on the strategies of both the decision-maker and adversary. Training a probabilistic model conditioned on observed return to predict action can fail to generalize, as the trajectories that achieve a return in the dataset might have done so due to a suboptimal behavior adversary. To address this, we propose a worst-case-aware RvS algorithm, the Adversarially Robust Decision Transformer (ARDT), which learns and conditions the policy on in-sample minimax returns-to-go. 
ARDT aligns the target return with the worst-case return learned through minimax expectile regression, thereby enhancing robustness against powerful test-time adversaries. In experiments conducted on sequential games with full data coverage, ARDT can generate a maximin (Nash Equilibrium) strategy, the solution with the largest adversarial robustness. In large-scale sequential games and continuous adversarial RL environments with partial data coverage, ARDT demonstrates significantly superior robustness to powerful test-time adversaries and attains higher worst-case returns compared to contemporary DT methods.
\end{abstract}

\section{Introduction}
\label{sec:intro}

Reinforcement Learning via Supervised Learning (RvS) \citep{emmons2021rvs}, has garnered attention within the domain of offline Reinforcement Learning (RL). Framing RL as a problem of outcome-conditioned sequence modeling, these methodologies have showcased strong performance in offline benchmark tasks~\citep{wu2024elastic,yamagata2023q,janner2021sequence,schmidhuber2019reinforcement,kumar2019reward}. One of the representative RvS methods is Decision Transformer (DT) ~\citep{chen2021decision}, which simply trains a policy conditioned on target return via behavior cloning loss. Given the apparent efficacy and simplicity of RvS, our attention is drawn to exploring its performance in adversarial settings. 
In this paper, we aim to use RvS to achieve \emph{adversarial robustness}, a critical capability for RL agents to manage environmental variations and adversarial disturbances~\citep{pinto2017robust,tessler2019action,kamalaruban2020robust,vinitsky2020robust,curi2021combining,rigter2022rambo}. Such challenges are prevalent in real-world scenarios, e.g., changes in road conditions and decision-making among multiple autonomous vehicles.



In offline RL with adversarial actions, the naive training of action-prediction models conditioned on the history and observed returns such as DT, or expected returns such as ESPER and DoC \citep{paster2022you,yang2022dichotomy,yamagata2023q}, which we refer to as Expected Return-Conditioned DT (ERC-DT), may result in a non-robust policy. This issue arises from the potential distributional shifts in the policy of the adversary during offline learning.
Specifically, if the behavior policies used for data collection are suboptimal, a well-trained conditional policy overfitted to the training dataset may fail to achieve the desired target return during testing, if the adversary's policy at test time has changed or become more effective.


We illustrate this using a sequential game with data collected by a uniform random policy with full coverage shown in Figure \ref{fig:toy_example}. The standard RvS methods (DT and ERC-DT) when conditioned on high target return, show poor worst-case performance. DT's failure stems from training data containing the sequences \((a_0, \bar{a}_2)\) and \((a_1, \bar{a}_3)\), inaccurately indicating both actions \(a_0\) and \(a_1\) as optimal. ERC-DT missteps because the adversary’s uniform random policy skews the expected return in favor of \(a_0\) over \(a_1\), wrongly favoring \(a_0\). A robust strategy, however, exclusively selects \(a_1\) against an optimal adversary to guarantee a high reward. Moreover, in more complex multi-decision scenarios, the adversarial strategy should not only minimize but also counterbalance with maximized returns by the decision-maker at the subsequent state, aligning with a \emph{minimax} return approach for robust policy development. \looseness=-1

\textbf{Main Contributions.} Consequently, there is a need for data curation to indicate the potential worst-case returns associated with each action in the Decision Transformer (DT), improving the robustness against adversarial actions. Building on this concept, we introduce the first robust training algorithm for Decision Transformer, the \textbf{Adversarially Robust Decision Transformer (ARDT)}. To the best of our knowledge, this paper represents the first exploration of the robustness of RvS methods from an \emph{adversarial} perspective. 
In ARDT, we leverage Expectile Regression ~\citep{newey1987asymmetric,aigner1976estimation} to transform the original returns-to-go to in-sample minimax returns-to-go, and train DT with the relabeled returns.
In the experiments, we illustrate the robustness of ARDT in three settings (i) Games with full data coverage (ii) A long-horizon discrete game with partial data coverage (iii) Realistic continuous adversarial RL problems. We provide evidence showcasing the robustness of ARDT to more powerful adversary compared to the behavior policy and the distributional shifts of the adversary's policy in the test time. Furthermore, in the adversarial MuJoCo tasks, namely Noisy Action Robust MDP ~\citep{tessler2019action}, ARDT exhibits better robustness to a population of adversarial perturbations compared to existing DT methods. \looseness=-1

\begin{figure*}[t!]
    \centering
    
    \begin{minipage}{.49\textwidth}
    \centering
    \includegraphics[width=\textwidth]{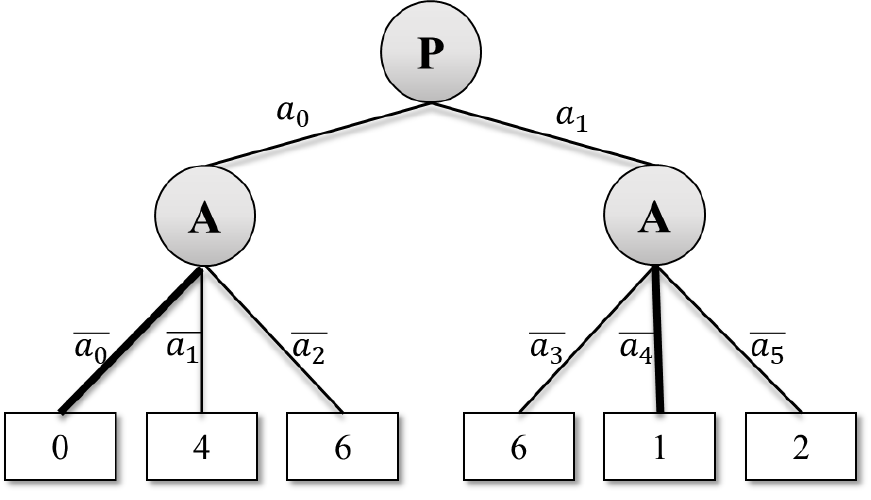}
    \end{minipage}
    \begin{minipage}{.49\textwidth}
    \centering
    \begin{tabular}{@{}cc@{}}
    \toprule
    Algorithms    & Policy ($\hat{R}=6$): ${P}(a_0)$, ${P}(a_1)$ \\ \midrule
    DT            & 0.54, 0.46                  \\
    ERC-DT & 0.9998, 0.0002                  \\
    \textbf{ARDT (ours)}          & \textbf{0.0006, 0.9994}                  \\
     \bottomrule
    \end{tabular}
    \begin{tabular}{@{}cc@{}}
    \toprule
    Algorithms    & Expected Worst-case Return \\ \midrule
    DT            & 0.46                  \\
    ERC-DT & 0.0002                  \\
    \textbf{ARDT (ours)}         & \textbf{0.9994}                  \\
    \bottomrule
    \end{tabular}
    \end{minipage}
\caption{ \textbf{LHS} presents the game where decision-maker \( P \) is confronted by adversary \( A \). In the worst-case scenario, if \( P \) chooses action \( a_0 \), \( A \) will respond with \( \bar{a}_0 \), and if \( P \) chooses \( a_1 \), \( A \) will counter with \( \bar{a}_4 \). Consequently, the worst-case returns for actions \( a_0 \) and \( a_1 \) are \( 0 \) and \( 1 \), respectively. Therefore, the robust choice of action for the decision-maker is \( a_1 \). 
\textbf{RHS} displays tables of action probabilities and the worst-case returns for the Decision Transformer (DT), Expected Return-Conditioned DT (ERC-DT) methods and our algorithm, when conditioned on the largest return-to-go $6$. After training using uniformly collected data that covered all possible trajectories, the results reveal that DT fails to select the robust action $a_1$, whereas our algorithm manages to do so.\looseness=-1
}
\label{fig:toy_example}
\end{figure*}

\textbf{Related work.}
Reinforcement Learning via Supervised Learning is a promising direction for solving general offline RL problems by converting the RL problem into the outcome-conditioned prediction problem \citep{kumar2019reward,schmidhuber2019reinforcement,ghosh2019learning,paster2020planning,emmons2021rvs}. Decision Transformer (DT) is one of the RvS instance \citep{chen2021decision,kim2023decision}, leveraging the strength of the sequence model Transformer \citep{vaswani2017attention}. The variants of DT have been extensively investigated to address the trajectory stitching \citep{wu2024elastic,yamagata2023q}, stochasticity \citep{paster2020planning,ortega2021shaking,paster2022you,yang2022dichotomy}, goal-conditioned problems \citep{furuta2021generalized}, and generalization in different environments \citep{meng2021offline,xu2022prompting,lee2022multi,xu2023hyper}. For different problems, it is critical to select an appropriate target as the condition of the policy. $Q$-Learning DT \citep{yamagata2023q} relabels the trajectories with the estimated $Q$-values learned to achieve trajectory stitching \citep{kumar2020conservative}. ESPER \citep{paster2022you}, aims at solving the environmental stochasticity via relabeling the trajectories with expected returns-to-go. Although our method also transform the returns-to-go to estimated values, we instead associate worst-case returns-to-go with trajectories, aiming at improving its robustness against adversarial perturbations. \looseness=-1


Adversarial RL can be framed as a sequential game; with online game-solving explored extensively in literature \citep{lanctot2009monte, heinrich2015fictitious, sunehag2017value, lowe2017multi, lanctot2017unified, srinivasan2018actor, schrittwieser2020mastering, tang2023regret}. In the offline setting, where there is no interaction with the environment, solving games is challenging due to distributional shifts in the adversary's policy, necessitating adequate data coverage. Tabular methods, such as Nash $Q$-Learning—a decades-old approach for decision-making in competitive games \citep{hu2003nash}—and other similar methods \citep{cui2022offline,zhong2022pessimistic} address these challenges through pessimistic out-of-sample estimation. While value-based methods are viable for addressing adversarial problems, our focus is on enhancing the robustness of RvS methods in these settings. Note that while some offline multi-agent RL methods, including DT-based approaches, are designed for cooperative games \citep{meng2021offline, wen2022multi, tseng2022offline, wang2024offline}, our research is oriented towards competitive games.\looseness=-1

\section{Problem Setup}
\label{sec:approach}

We consider the adversarial reinforcement learning problem involving a protagonist and an adversary, as described by~\citet{pinto2017robust}. This setting is rooted in the two-player Zero-Sum Markov Game framework~\cite{littman1994markov,perolat2015approximate}. This game is formally defined by the tuple $(\mathcal{S},\mathcal{A},\bar{\mathcal{A}},T,R, p_0)$. Here, $\mathcal{S}$ represents the state space, where $\mathcal{A}$ and $\bar{\mathcal{A}}$ denote the action spaces for the protagonist and adversary, respectively. The reward function $R: \mathcal{S} \times \mathcal{A} \times \bar{\mathcal{A}} \to \mathbb{R}$ and the transition kernel $T: \mathcal{S} \times \mathcal{A} \times \bar{\mathcal{A}} \to \Delta(\mathcal{S})$ depend on the state and the joint actions of both players. The initial state distribution is denoted by $p_0$. At each time-step $t\leq H$, both players observe the state $s_t \in \mathcal{S}$ and take actions $a_t \in \mathcal{A}$ and $\bar{a}_t \in \bar{\mathcal{A}}$. The adversary is considered adaptive if its action $\bar{a}_t$ depends on $s_t$ and $a_t$. The protagonist receives a reward $r_t = R(s_t, a_t, \bar{a}_t)$, while the adversary receives a negative reward $-r_t$. 

Denote trajectory $\tau = \br{s_t, a_t, \bar{a}_t, r_t}_{t=0}^H$ and its sub-trajectory $\tau_{i:j} = \br{s_t, a_t, \bar{a}_t, r_t}_{t=i}^j$ for $0\leq i \leq j \leq H$. The return-to-go refers to the sum of observed rewards from state $s_t$ onwards: $\widehat{R}(\tau_{t:H}) = \sum_{t'=t}^H r_{t'}$. The protagonist policy $\pi$ and the adversary policy $\bar{\pi}$ can be history-dependent, mapping history $(\tau_{0:t-1}, s_t)$ to the distribution over protagonist and adversary's actions, respectively.\looseness=-1


In offline RL, direct interaction with the environment is unavailable. Instead, we rely on an offline dataset $\mathcal{D}$, which consists of trajectories generated by executing a pair of behavioral policies, $(\pi_\mathcal{D}, \bar{\pi}_\mathcal{D})$. Our objective in this adversarial offline RL setting is to utilize this dataset $\mathcal{D}$ to learn a protagonist policy that seeks to maximize the return, while being counteracted by an adversary's policy $\bar{\pi}$: 
\begin{align}
\max_{{\pi}} \min_{\bar{\pi}} \expectover{\tau \sim \rho^{\pi, \bar{\pi}}}{\sum_t r_t} , 
\label{eq:markov_game}
\end{align}
where $\rho^{\pi, \bar{\pi}}(\tau) = p_0(s_0) \cdot \prod_t \pi(a_t \mid \tau_{0:t-1}, s_t) \cdot \bar{\pi}(\bar{a}_t \mid \tau_{0:t-1}, s_t, a_t) \cdot T(s_{t+1} \mid s_t, a_t, \bar{a}_t)$. The maximin solution to this problem and its corresponding optimal adversary are denoted as $\pi^*$ and $\bar{\pi}^*$, respectively. \looseness=-1

\vspace{1ex}
\subsection{RvS in Adversarial Environments} 
\vspace{1ex}
\label{sec:rvs_in_adv}



We extend the RvS framework by incorporating adversarial settings within the Hindsight Information Matching (HIM) paradigm~\citep{furuta2021generalized}. In adversarial HIM, the protagonist receives a goal (denoted by $z$) and acts within an adversarial environment. The protagonist's performance is evaluated based on how well it achieves the presented goal. Mathematically, the protagonist aims to minimize a distance metric $D(\cdot, \cdot)$ between the target goal $z$ and the information function $I(\cdot)$ applied to a trajectory $\tau$. This sub-trajectory follows a distribution $\rho$ dependent on both the protagonist's policy $\pi_z=\pi(a_t \mid \tau_{0:t-1}, s_t, z)$ and the test-time adversarial policy $\bar{\pi}_{\textnormal{test}}$. Formally, the information distance is minimized as follows:
\begin{equation}
    \min_\pi \expectover{\tau \sim \rho^{\pi_z, \bar{\pi}_{\textnormal{test}}}}{D(I(\tau), z)} ,
\label{eq:minimax-dist-minimization}
\end{equation}
where $\rho^{\pi_z, \bar{\pi}_{\textnormal{test}}}$ represents the trajectory distribution obtained by executing rollouts in a Markov Game with the protagonist's policy $\pi_z$ and the test-time adversarial policy $\bar{\pi}_{\textnormal{test}}$. Under HIM framework, Decision Transformer~\cite{chen2021decision} uses adopt return-to-go value as the information function, while ESPER~\cite{paster2022you} employs the expected return-to-go. The well-trained RvS policy, including DT and ESPER, is derived from behavior trajectories filtered based on the condition $I(\tau)=z$:
\begin{align}
    \pi_z=\pi(a_t \mid \tau_{0:t-1}, s_t, z)= \rho^{\pi_\mathcal{D}, \bar{\pi}_\mathcal{D}}(a_t \mid \tau_{0:t-1}, s_t, I(\tau) = z). 
\label{eq:rvs_policy}
\end{align}
If the conditional policy ensures that the information distance in Eq.~\eqref{eq:minimax-dist-minimization} is minimized to $0$, the generated policy $\pi_z$ is robust against $\bar{\pi}_{\text{test}}$ by simply conditioning on a large target return, or even the maximin optimal protagonist $\pi^*=\lim_{z \rightarrow +\infty} \pi_z$ if the test-time adversary is optimal. Consequently, our objective shifts from learning a maximin policy in Eq.~\eqref{eq:markov_game} to minimizing the distance in Eq.~\eqref{eq:minimax-dist-minimization}.\looseness=-1

For a given target goal $z$, the trajectories where the information function $I(\tau) = z$ can be considered as optimal demonstrations of achieving that goal, i.e. information distance minimization, $D(I(\tau), z) = 0$. However, in adversarial environments, this information distance is hard to be minimized to $0$. In adversarial offline RL, a trajectory in the dataset might achieve a goal due to the presence of a weak behavior adversary, rather than through the protagonist's effective actions. Filtering datasets for trajectories with high return-to-go could select the ones from encounters with suboptimal adversaries. 
Consequently, an agent trained through behavioral cloning on these trajectories is not guaranteed to achieve a high return when facing a different and powerful adversary at test time. 
We show this formally through a theorem: \looseness=-1

\begin{theorem}
    Let $\pi_\mathcal{D}$ and $\bar{\pi}_\mathcal{D}$ be the data collecting policies used to gather data for training an RvS protagonist policy $\pi$. Assume $T(s_{t+1} \mid s_t, a_t, {\bar{a}_t}) = \rho^{\pi_\mathcal{D}, \bar{\pi}_\mathcal{D}}(s_{t+1} \mid \tau_{0:t-1}, s_t, a_t, {\bar{a}_t}, I(\tau) = z)$ 
    \footnote{This condition can be readily met in environments formulated as a game with deterministic transitions, which are prevalent in many sequential games (e.g., Go and Chess~\citep{schrittwieser2020mastering}), as well as in frequently considered MuJoCo environments involving adversarial action noise~\citep{pinto2017robust, cheng2022adversarially}.\looseness=-1}
    , for any goal $z$ such that $\rho^{\pi_\mathcal{D}, \bar{\pi}_\mathcal{D}}(I(\tau) = z \mid s_0) > 0$, the information distance can be minimized:  $\expectover{\tau \sim \rho^{\pi_z, \bar{\pi}_{\textnormal{test}}}}{D(I(\tau_t), z)} = 0$ if
    $\bar{\pi}_{\textnormal{test}}(\bar{a}_t \mid \tau_{0:t-1}, s_t, a_t) = \rho^{\pi_\mathcal{D}, \bar{\pi}_\mathcal{D}}(\bar{a}_t \mid \tau_{0:t-1}, s_t, a_t, I(\tau) = z)$.
    \label{thm:adv-rvs-conditions}
\end{theorem}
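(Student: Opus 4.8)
The plan is to show that rolling out the RvS policy $\pi_z$ against the prescribed test-time adversary $\bar{\pi}_{\textnormal{test}}$ reproduces, conditioned on the start state $s_0$, exactly the offline data distribution conditioned on the event $\{I(\tau)=z\}$; that is, $\rho^{\pi_z, \bar{\pi}_{\textnormal{test}}}(\cdot \mid s_0) = \rho^{\pi_\mathcal{D}, \bar{\pi}_\mathcal{D}}(\cdot \mid s_0, I(\tau)=z)$. Granting this identity the theorem is immediate: under the right-hand distribution one has $I(\tau)=z$ almost surely, hence $D(I(\tau),z)=D(z,z)=0$ almost surely, and averaging over $s_0 \sim p_0$ --- the hypothesis $\rho^{\pi_\mathcal{D}, \bar{\pi}_\mathcal{D}}(I(\tau)=z\mid s_0)>0$ guaranteeing the conditional law, and therefore $\pi_z$, is well-defined at every admissible $s_0$ --- yields $\expectover{\tau \sim \rho^{\pi_z, \bar{\pi}_{\textnormal{test}}}}{D(I(\tau),z)}=0$.

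\textbf{Main step.} To prove the distributional identity I would expand both sides by the chain rule in the causal order $s_0, a_0, \bar{a}_0, s_1, a_1, \dots$ (the rewards $r_t=R(s_t,a_t,\bar{a}_t)$ are deterministic functions of the remaining coordinates, so they can be dropped). The conditioned data law then factors into a product over $t$ of the conditional action law $\rho^{\pi_\mathcal{D}, \bar{\pi}_\mathcal{D}}(a_t \mid \tau_{0:t-1}, s_t, I(\tau)=z)$, the conditional adversary law $\rho^{\pi_\mathcal{D}, \bar{\pi}_\mathcal{D}}(\bar{a}_t \mid \tau_{0:t-1}, s_t, a_t, I(\tau)=z)$, and the conditional next-state law $\rho^{\pi_\mathcal{D}, \bar{\pi}_\mathcal{D}}(s_{t+1} \mid \tau_{0:t-1}, s_t, a_t, \bar{a}_t, I(\tau)=z)$. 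I then match these three families term by term: the first equals $\pi_z(a_t \mid \tau_{0:t-1}, s_t)$ by the definition of the RvS policy in Eq.~\eqref{eq:rvs_policy}; the second equals $\bar{\pi}_{\textnormal{test}}(\bar{a}_t \mid \tau_{0:t-1}, s_t, a_t)$ by the hypothesis of the theorem; the third equals $T(s_{t+1}\mid s_t,a_t,\bar{a}_t)$ by the stated assumption on the transition kernel. Since these are precisely the factors defining $\rho^{\pi_z, \bar{\pi}_{\textnormal{test}}}$, a clean way to finish is an induction on $t$ with inductive hypothesis ``the two laws agree on the prefix $(\tau_{0:t-1}, s_t)$'': on prefixes of zero conditional-data mass the inductive hypothesis forces zero rollout mass, so the (possibly undefined) conditionals there are irrelevant, and on positive-mass prefixes the three matched factors propagate the agreement one step further.

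\textbf{Where the difficulty sits.} The argument is short, and I expect the only genuinely delicate point to be the third factor-matching step --- and this is exactly where the transition assumption of the theorem does its work. Conditioning on the \emph{future} event $\{I(\tau)=z\}$ can in general reweight the next state, so $\rho^{\pi_\mathcal{D}, \bar{\pi}_\mathcal{D}}(s_{t+1} \mid \tau_{0:t-1}, s_t, a_t, \bar{a}_t, I(\tau)=z)$ need not coincide with $T(s_{t+1}\mid s_t,a_t,\bar{a}_t)$; the hypothesis forbids this mismatch, and it holds automatically whenever transitions are deterministic (as in the sequential games and noisy-action MuJoCo settings mentioned in the footnote). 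The remaining bookkeeping --- ensuring $\pi_z$ is only ever queried on histories reachable under the conditioned data law, and noting that the discrepancy between $p_0(s_0)$ and $\rho^{\pi_\mathcal{D}, \bar{\pi}_\mathcal{D}}(s_0 \mid I(\tau)=z)$ is harmless since we only need $D(I(\tau),z)=0$ conditionally on each $s_0$ --- is routine and is absorbed into the induction.
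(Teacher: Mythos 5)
Your proposal is correct and follows essentially the same route as the paper's proof: factorize the rollout distribution $\rho^{\pi_z,\bar{\pi}_{\textnormal{test}}}(\tau\mid s_0)$, replace the protagonist factor via Eq.~\eqref{eq:rvs_policy} and the adversary/transition factors via the stated hypothesis and assumption, reassemble the product into $\rho^{\pi_\mathcal{D},\bar{\pi}_\mathcal{D}}(\tau\mid s_0, I(\tau)=z)$, and conclude that the expected distance vanishes because $I(\tau)=z$ holds surely under that conditional law. Your added induction over prefixes to handle zero-mass histories is just a more careful bookkeeping of the same merging step the paper performs directly.
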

Theorem~\ref{thm:adv-rvs-conditions} suggests that the protagonist policy $\pi_z$, defined in Eq.~\eqref{eq:rvs_policy}, can minimize the information distance when the test-time adversarial policy is $\bar{\pi}_{\text{test}} = \rho^{\pi_\mathcal{D}, \bar{\pi}_\mathcal{D}}(\bar{a}_t \mid \tau_{0:t-1}, s_t, a_t, I(\tau) = z)$. In other words, $\pi_z$, conditioned on a large target return, is guaranteed to be robust against $\rho^{\pi_\mathcal{D}, \bar{\pi}_\mathcal{D}}(\bar{a}_t \mid \tau_{0:t-1}, s_t, a_t, I(\tau) = z)$. Consequently, the policies of DT and ESPER are only guaranteed to perform well against behavior adversarial policies and will be vulnerable to powerful test-time adversaries since their information function is dependent on behavior adversary. This is a concern because, in practice, offline RL datasets are often collected in settings with weak or suboptimal adversaries.

In this context, to enhance robustness, we consider using an information function $I(\tau)$ to train the protagonist policy $\pi_z$ by simulating the worst-case scenarios we might encounter during test time. A natural choice for $I(\tau)$ is the \textbf{minimax return-to-go}, which we formally introduce in \Cref{sec:ardt} and use as a central component of our method. With this $I(\tau)$ and sufficient data coverage, $\rho^{\pi_\mathcal{D}, \bar{\pi}_\mathcal{D}}(\bar{a}_t \mid \tau_{0:t-1}, s_t, a_t, I(\tau) = z)$ can be approximated to the optimal adversarial policy, implying that $\pi_z$ is robust against the optimal adversary. Furthermore, even without sufficient data coverage, $\pi_z$ remains more robust compared to previous DT methods, which tend to overfit to the behavior adversarial policy.

\section{Adversarially Robust Decision Transformer}
\label{sec:ardt}

To tackle the problem of suboptimal adversarial behavior in the dataset, we propose \textbf{Adversarially Robust Decision Transformer (ARDT)}, a worst-case-aware RvS algorithm. In ARDT, the information function is defined as the expected minimax return-to-go: 
\begin{align}
I(\tau) ~=~ \min_{{\bar{a}}_{t:H}} \max_{{a}_{t+1:H}} \expectover{\substack{s_{t'+1} \sim T(\cdot \mid s_{t'}, a_{t'}, \bar{a}_{t'}) \\ t' \in [t, H] }} {\sum_{t'=t}^H R(s_{t'}, a_{t'}, \bar{a}_{t'}) ~\bigg|~ \tau_{0:t-1}, s_t, a_t}.
\label{eq:ardt_info}
\end{align}
Intuitively, this information function represents the expected return when an adversary aims to minimize the value, while the protagonist subsequently seeks to maximize it. To train a protagonist policy as described in Eq.~\eqref{eq:rvs_policy}, ARDT first relabels the trajectory with the minimax returns-to-go in Eq. \eqref{eq:ardt_info}. Secondly, the protagonist policy is then trained similarly to a standard Decision Transformer (DT). This two-step overview of ARDT is provided in Figure \ref{fig:ARDT}, where the left block includes the minimax return estimation with expectile regression for trajectory relabeling, and the right block represents the DT training conditioned on the relabeled returns-to-go.

We relabel the trajectories in our dataset with the worst-case returns-to-go by considering the optimal in-sample adversarial actions. In practice, we approximate the proposed information function using an (in-sample) \emph{minimax returns-to-go} predictor, denoted by $\widetilde{Q}$. We simplify the notation of the appearance of actions in the dataset $a_t\in \mathcal{A}: \pi_{\mathcal{D}}(a_t|\tau_{0:t-1}, s_t)>0$ and $\bar{a}_t\in \bar{\mathcal{A}}: \bar{\pi}_{\mathcal{D}}(\bar{a}_t|\tau_{0:t-1}, s_t, a_t)>0$ to $a_t \in \mathcal{D}$ and $\bar{a}_t \in \mathcal{D}$, respectively. The optimal predictor $\widetilde{Q}^*$ is a history-dependent state-action value function satisfying the following condition that at any time $t=1,\cdots, T$:
\begin{align}
    \widetilde{Q}^*(\tau_{0:t-1}, s_t, a_t) ~=~& \min_{\bar{a}_t \in \mathcal{D}} r_t + \expectover{s_{t+1} \sim T(\cdot \mid s_t, a_t, {\bar{a}_t})}{\max_{a_{t+1} \in \mathcal{D}} \widetilde{Q}^*(\tau_{0:t}, s_{t+1}, a_{t+1})},
\label{eq:mmq}
\end{align}



We adopt Expectile Regression (ER)~\citep{newey1987asymmetric,aigner1976estimation} to approximate $\widetilde{Q}$. This choice is particularly advantageous for RL applications because it helps avoid out-of-sample estimation or the need to query actions to approximate the minimum or maximum return~\citep{kostrikov2021offline}. Specifically, the loss function for ER is a weighted mean squared error (MSE):
\begin{align}
    L_{\text{ER}}^{\alpha}(u) = \expectover{u} {|\alpha - \mathbf{1}(u > 0)| \cdot u^2}.
\label{eq_erloss}
\end{align}
Suppose a random variable $y$ follows a conditional distribution $y \sim \rho(\cdot| x)$, function $g_\alpha(x) := \arg \min_{g(x)} L_{\text{ER}}^{\alpha}(g(x) - h(x,y))$ can serve as an approximate minimum or maximum operator for the function $h(x,y)$ over all possible value of $y$, i.e.,
\begin{align}
\lim_{\alpha\rightarrow 0} g_\alpha(x) = \min_{y: \rho(y \mid x) > 0} h(x,y),\text{ or  } \lim_{\alpha\rightarrow 1} g_\alpha(x) = \max_{y: \rho(y \mid x) > 0} h(x,y).
\label{eq:max_d}
\end{align}


\begin{figure*}[t!]
    \centering
    \includegraphics[width=\textwidth]{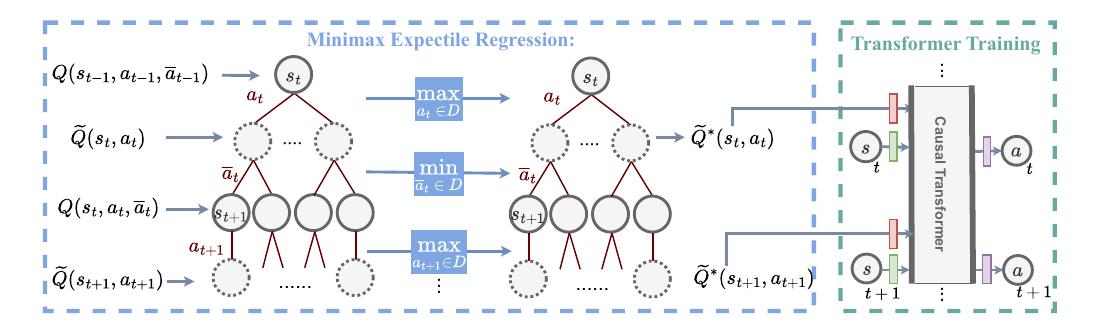}
    \caption{ Training of Adversarially Robust Decision Transformer. We adopt Expectile Regression for estimator $\widetilde{Q}$ to approximate the in-sample minimax. In the subsequent protagonist DT training, we replace the original returns-to-go with the learned values $\widetilde{Q}^*$ to train policy.
    }
    \label{fig:ARDT}
\end{figure*}

Our algorithm uses coupled losses for maximum and minimum operators to approximate the in-sample minimax returns-to-go. Formally, given an offline dataset $\mathcal{D}$, we alternately update the minimax and maximin returns-to-go estimators: $\widetilde{Q}_{\nu}(s,a)$ and $Q_{\omega}(s,a, \bar{a})$. In each iteration, we fix $\omega$ (or $\nu$), denoted as $\widehat{\omega}$ (or $\widehat{\nu}$), and then update $\nu$ (or $\omega$) based on the following loss functions: \looseness=-1
\begin{align}
    \ell^\alpha(\nu) &= \expectover{\tau \sim \mathcal{D}}{L^{\alpha}_{\text{ER}}(\widetilde{Q}_{\nu}(\tau_{0:t-1}, {s_t}, a_t) - {Q}_{\widehat{\omega}} (\tau_{0:t-1}, {s_{t}},a_{t}, \bar{a}_t))}, \label{eq:loss_t1} \\
    \ell^{1-\alpha}(\omega) &= \expectover{\tau \sim \mathcal{D}}{L^{1-\alpha}_{\text{ER}}({Q}_{\omega}(\tau_{0:t-1}, {s_t}, a_t, \bar{a}_t) - \widetilde{Q}_{\widehat{\nu}}(\tau_{0:t}, {s_{t+1}},a_{t+1}) - r_t )}.
    \label{eq:loss_t2}
\end{align}

The minimizers ($\nu^*$ and $\omega^*$) of the two losses mentioned above (with $\alpha \to 0$), combined with Eq.~\eqref{eq:max_d}, satisfy the following condition for all ${s_t}, a_t, \bar{a}_t \in \mathcal{D}$:
\begin{align}
    \widetilde{Q}_{\nu^*}(\tau_{0:t-1}, {s_t}, a_t) &=  \min_{\bar{a} \in \mathcal{D}} {Q}_{\omega^*}(\tau_{0:t-1}, {s_{t}},a_{t}, \bar{a}), \label{eq:onestep_min} \\ 
    Q_{\omega^*}(\tau_{0:t-1}, {s_t}, a_t, \bar{a}_t)  &= \expectover{s_{t+1} \sim \rho^{\pi_\mathcal{D}, \bar{\pi}_{\mathcal{D}}}(\tau_{0:t-1},{s_t}, a_t, \bar{a}_t)}{\max_{a' \in \mathcal{D}} \widetilde{Q}_{\nu^*}(\tau_{0:t}, 
    {s_{t+1}},a') + r_t}.
    \label{eq:onestep_max}
\end{align}
By substituting Eq.~\eqref{eq:onestep_max} into Eq.~\eqref{eq:onestep_min}, the minimax returns-to-go predictor $\widetilde{Q}_{\nu^*}$ satisfies Eq.~\eqref{eq:mmq}.

\looseness-1The formal algorithm of ARDT is presented in Algorithm~\ref{algo:ARDT}, where from line 3-6 is the minimax return estimation diagrammed as the left block in Figure \ref{fig:ARDT}, and line 7-10 is the Transformer training on the right of Figure \ref{fig:ARDT}. Initially, we warm up two returns-to-go networks with the original returns-to-go values from the dataset. This step ensures the value function converges to the expected values, facilitating faster maximin value estimation and guaranteeing accurate value function approximation at terminal states. Subsequently, we estimate the minimax returns-to-go by alternately updating the parameters of the two networks based on the expectile regression losses in Eq.~\eqref{eq:loss_t2} and Eq.~\eqref{eq:loss_t1}. Once the minimax expectile regression converges, we replace the original returns-to-go values in the trajectories with the values predicted by $\widetilde{Q}_{\nu}$. Finally, we train the Decision Transformer (DT) conditioned on the states and relabeled returns-to-go to predict the protagonist's actions. Then the ARDT protagonist policy is ready to be deployed for evaluation in the adversarial environment directly.

\begin{algorithm}[t!]
\caption{Adversarially Robust Decision Transformer (ARDT)}
\label{algo:ARDT}
\begin{algorithmic}[1]
\State \textbf{Inputs and hyperparameters:} offline dataset $\mathcal{D}$ containing collection of trajectories; $\alpha=0.01$ 
\State \textbf{Initialization:} Initialize the parameters $\omega$ and $\nu$ for in-sample returns-to-go networks $Q_\omega$ and $\widetilde{Q}_\nu$ with the original returns-to-go, and parameter $\theta$ of the protagonist policy $\pi_\theta$.  
\Statex \textcolor{blue}{// expectile regression updates for in-sample returns-to-go networks}
\For{iteration $k=0,1,\cdots$}
\For{trajectory $\tau \in \mathcal{D}$}
    \Statex \ \ \ \ \quad \quad \textcolor{blue}{// update $\omega$ while keeping $\nu$ fixed}
    \State $\omega \gets \omega - \nabla_{\omega} \ell^{1-\alpha}(\omega)$, where $\ell^{1-\alpha}(\omega)$ is given in \eqref{eq:loss_t2}.   
    \Statex \ \ \ \ \quad \quad \textcolor{blue}{// update $\nu$ while keeping $\omega$ fixed}
    \State $\nu \gets \nu - \nabla_{\nu} \ell^{\alpha}(\nu)$, where $\ell^{\alpha}(\nu)$ is given in \eqref{eq:loss_t1}.
\EndFor{}
\EndFor{}
\Statex \textcolor{blue}{// relabeling dataset $\mathcal{D}$ using in-sample returns-to-go}
\State Create $\mathcal{D}_\textnormal{worst} = \bc{\tau = (\cdots, \widetilde{R}_{t-1}, s_{t-1}, a_{t-1}, \widetilde{R}_t, s_t, a_t, \cdots) \mid \tau \in \mathcal{D}, \widetilde{R}_t = \widetilde{Q}_\nu(s_t, {a_t})}$
\Statex \textcolor{blue}{// protagonist policy update}
\State $L(\theta) = - \sum_{\tau \in \mathcal{D}_\textnormal{worst}} \sum_t \log {\pi}_{\theta}(a_t \mid \tau_{0:t-1}, {s_t}, \widetilde{R}_t)$
\For{iteration $k=0,1,\cdots$}
    \State $\theta \gets \theta - \nabla_{\theta} L(\theta)$
\EndFor{}
\State \textbf{Output:} Policy $\pi_\theta$
\end{algorithmic}
\end{algorithm}

\vspace{1ex}
\section{Experiments}
\vspace{1ex}
\label{sec:exp}
In this section, we conduct experiments to examine the robustness of our algorithm, Adversarially Robust Decision Transformer (ARDT), in three settings: (i) Short-horizon sequential games, where the offline dataset has full coverage and the test-time adversary is optimal (Section \ref{sec:she}), (ii) A long-horizon sequential game, Connect Four, where the offline dataset has only partial coverage and the distributional-shifted test-time adversary (Section \ref{sec:c4}), and (iii) The standard continuous Mujoco tasks in the adversarial setting and a population of test-time adversaries (Section \ref{sec:exp_mujoco}). We compare our algorithm with baselines including Decision Transformer and ESPER. The implementation details are in Appendix \ref{append:implement}. Notably, since the rewards and states tokens encompass sufficient information about the adversary's actions, all DT models are implemented not conditioned on the past adversarial tokens to reduce the computational cost.

\subsection{Full Data Coverage Setting}
\label{sec:she}

The efficacy of our solution is first evaluated on three short-horizon sequential games with adaptive adversary: (1) a single-stage game, (2) an adversarial, single-stage bandit environment depicting a Gambling round \citep{paster2022you}, and (3) a multi-stage game. These are depicted in Figure \ref{fig:toy_example}, Figure \ref{fig:ms_toy_examples}, and Figure \ref{fig:gambling}, respectively. The collected data consists of $10^5$ trajectories, encompassing all possible trajectories. The online policies for data collection employed by the protagonist and adversary were both uniformly random. 

Figure \ref{fig:toy_results} illustrates the efficacy of ARDT in comparison to vanilla DT and ESPER. Across all three environments, ARDT achieves the return of maximin (Nash Equilibrium) against the optimal adversary, when conditioned on a large target return.
As illustrated in Figure \ref{fig:toy_example}, ARDT is aware of the worst-case return associated with each action it can take due to learning to condition on the relabeled minimax returns-to-go, and successfully takes robust action $a_1$. In single-stage game (Figure \ref{fig:toy_results}), ESPER is vulnerable to the adversarial perturbations, and DT fails to generate robust policy when conditioned on large target return. While, it is worth noting  that DT has achieved worst-case return $1$ when conditioning on target returns around $1$. \textit{This imposes the question: Can we learn the robust policy by simply conditioning on the largest worst-case return in testing with vanilla DT training?} \looseness=-1


We show via the Gambling environment in Figure \ref{fig:toy_results} that \textit{vanilla DT can still fail even conditioning on the largest worst-case return}. In this environment, the worst-case returns of three protagonist actions are $-15$, $-6$ and $1$. At the target return $1$, DT is unable to consistently reach a mean return of $1$. Similar to DT in the single-stage game, it tends to assign equal likelihood to $a_1$ and $a_2$ to reach return $1$, leading to a drop of performance to less than $1$ under adversarial perturbations. Therefore, simply conditioning on the robust return-to-go in testing is insufficient to generate robust policy with vanilla DT. Moreover,  as the target return approaches $5$, DT's performance drops more since DT tends to choose $a_0$ more often to hopefully achieve a return of $5$, but instead get negative return against the adversarial perturbations.\looseness=-1

Conversely, ARDT's policy is robust since it is trained with worst-case returns-to-go. ESPER also performs well in Gambling due to that in this game the robust action is also the one with the largest expected return (Figure \ref{fig:gambling}). ARDT also attains the highest return in a multi-stage game when conditioned on large target return, while other two algorithms fail. Consequently, in the context of full data coverage and an adaptive adversary, ARDT is capable of approximating the maximin (Nash Equilibrium) robust policy against the optimal adversarial policy, whereas DT is unable to do so.\looseness=-1

\begin{figure*}[t!]
    \centering
    \includegraphics[width=.99\textwidth]{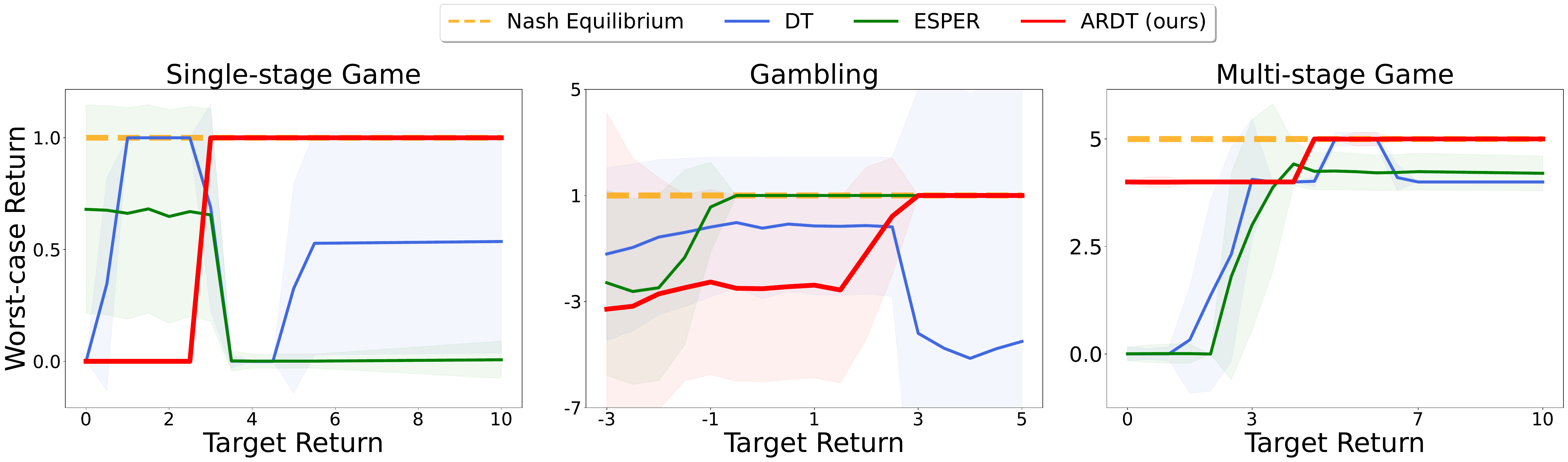} 

    \caption{ Worst-case return versus target return plot comparing the proposed ARDT algorithm against vanilla DT, on our Single-stage Game (left), Gambling (centre) and our Multi-stage Game (right), over $10$ seeds.}
    \label{fig:toy_results}
\end{figure*}
    

\vspace{1ex}
\subsection{Discrete Game with Partial Data Coverage}
\vspace{1ex}
\label{sec:c4}

\begin{figure*}[t!]
\centering 
\includegraphics[width=\textwidth]{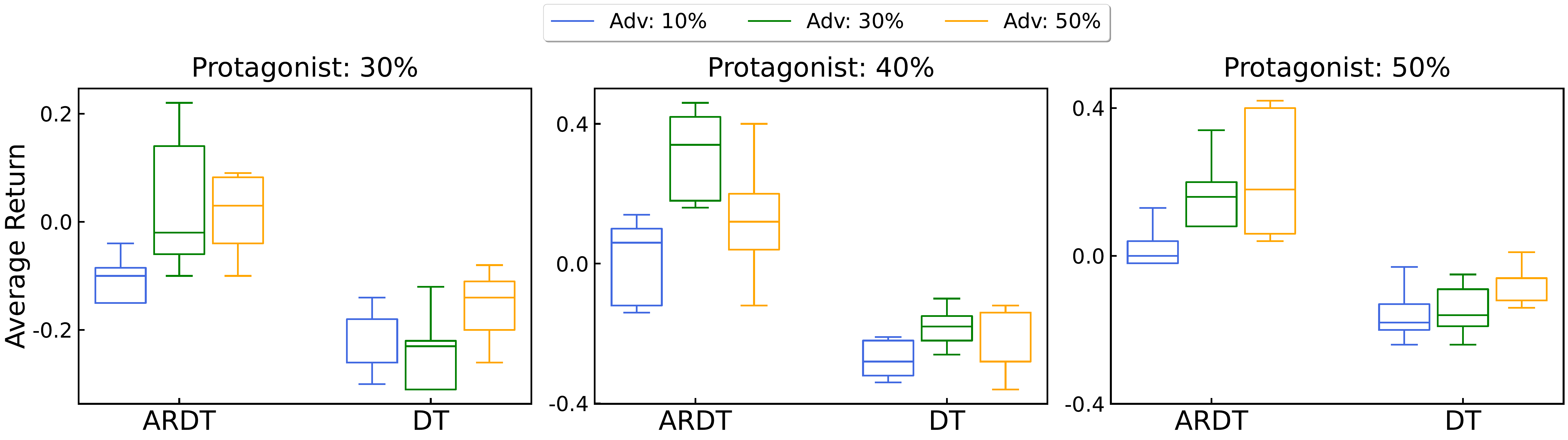}
\caption{ Average return of ARDT and vanilla DT on \textit{Connect Four} when trained on suboptimal datasets collected with different levels of optimality for both the online protagonist's policy ($30\%$, $40\%$ and $50\%$ optimal) and the adversary's policy ($10\%$, $30\%$, $50\%$ optimal), over $10$ seeds. We test against a fixed adversary that acts optimally $50\%$ of the time, and randomly otherwise. \looseness=-1}
\label{fig:c4_results}
\end{figure*}

\begin{table}[t!]
\centering
\begin{tabular}{@{}ccccc@{}}
\toprule
Test-time adversary & 30\% optimal                  & 50\% optimal                 & 70\% optimal                 & 100\% optimal             \\ \midrule

DT                  & 0.18 \footnotesize{(0.09)}           & -0.28 \footnotesize{(0.05)}        & -0.57 \footnotesize{(0.06)}         & -1 \footnotesize{(0.0)}          \\ 
ESPER                  &      0.44 \footnotesize{(0.09)}      & 0 \footnotesize{(0.1)}        &     -0.42 \footnotesize{(0.05)}     & \textbf{-0.98  \footnotesize{(0.01)}}          \\ 

\textbf{ARDT (ours)}       & \textbf{0.55 \footnotesize{(0.11)}} & \textbf{0.11 \footnotesize{(0.22)}} & \textbf{0.02 \footnotesize{(0.09)}} & -1 \footnotesize{(0.0)} \\ \bottomrule
\end{tabular}
\vspace{.35cm}
\caption{ Average returns of ARDT, vanilla DT and ESPER on \textit{Connect Four} when trained on data mixed from both a near-random (30\%, 50\%) dataset and a near-optimal (90\%, 90\%) dataset. Evaluation is done against different optimality levels of adversary.\looseness=-1}
\label{table:c4}
\end{table}

\textit{Connect Four} is a discrete sequential two-player game with deterministic dynamics \citep{paster2022you}, which can also be viewed as a decision-making problem when facing an adaptive adversary. There is no intermediate reward; rather, there is a terminal reward of either $-1$, $0$, or $1$, meaning lose, tie or win for the first-moving player, respectively. Therefore we fix the target return in Connect Four as $1$. We fix the protagonist to be the first-move player. This game has a maximum horizon of $22$ steps. Since we have a solver of this game, we manage to collect the data with $\epsilon$-greedy policy. We define the optimality of an $\epsilon$-greedy policy of protagonist or adversary as $(1-\epsilon)\cdot 100\%$. Each dataset includes a total of $10^6$ number of steps for variable-length trajectories.

The results of training ARDT and DT on suboptimal datasets are presented in Figure \ref{fig:c4_results}. To demonstrate the stitching ability of the two methods, datasets collected by suboptimal protagonist and adversarial policies were chosen, namely at $30\%$, $40\%$ and $50\%$ optimality for the protagonist and $10\%$, $30\%$, $50\%$ optimality for the adversary. The figures show that ARDT outperforms DT learning from suboptimal datasets, and more clearly so when tested against more powerful adversaries than the behavior adversary.

We hypothesise that one important factor accounting for the negative returns of DT in the setting in \ref{fig:c4_results} may be the lack of long, winning trajectories in the dataset. To exclude this factor, in Table \ref{table:c4} we mix near-random and near-optimal datasets for training, and test our algorithms against different levels of adversary. Compared to ESPER and DT, ARDT still significantly outperforms DT, which suggests that even in the presence of trajectories with high returns in the data, training DT with original returns-to-go in this setting will limit its ability to extract a powerful and robust policy. In addition, the results show that ESPER outperforms DT, indicating relabeling trajectories with expected return can help the extracted policy be robust to distributionaly-shifted and more powerful test-time adversarial policies. ESPER conditioned on the expected return generates policy overfitted to the behavioral adversarial policy, which can fail under distributional shift. However, ARDT as a worst-case-aware algorithm, can achieve greater robustness at test time.

Therefore, in discrete environment with partial data coverage and adaptive adversary, ARDT is more robust to distributional shift and a more powerful test-time adversary.

\vspace{1ex}
\subsection{Continuous Adversarial Environments}
\vspace{1ex}
\label{sec:exp_mujoco}


\begin{table}
\centering
\small{
\begin{tabular}{cccc} 
\toprule
\multicolumn{4}{c}{\textbf{\textit{Worst-case Return}}}                                                                                          \\
\textbf{Datasets} & \textbf{DT}                             & \textbf{ESPER}                         & \textbf{ARDT}                             \\ 
\midrule
Hopper-lr         & 112.1 \footnotesize{(63.4)}             & 66.8 \footnotesize{(5.5)}              & \textbf{477.9 \footnotesize{(219.9)}}     \\
Hopper-mr         & 330.2 \footnotesize{(286.8)}            & 103.2 \footnotesize{(11.7)}            & \textbf{482.2 \footnotesize{(83.7)}}      \\
Hopper-hr         & \textbf{298.62 \footnotesize{(319.72)}} & 63.0 \footnotesize{(11.5)}             & \textbf{331.69 \footnotesize{(187.18)}}   \\ 
\midrule
Walker2D-lr       & \textbf{429.6 \footnotesize{(31.9)}}    & \textbf{380.6 \footnotesize{(156.9)}}  & \textbf{405.6 \footnotesize{(39.8)}}      \\
Walker2D-mr       & 391.0 \footnotesize{(20.6)}             & 426.9 \footnotesize{(29.7)}            & \textbf{508.4 \footnotesize{(61.8)}}      \\
Walker2D-hr       & 413.22 \footnotesize{(54.69)}           & \textbf{502.6 \footnotesize{(31.3)}}   & \textbf{492.63 \footnotesize{(109.28)}}   \\ 
\midrule
Halfcheetah-lr    & 1416.0 \footnotesize{(48.5)}            & \textbf{1691.8 \footnotesize{(63.0)}}  & 1509.2 \footnotesize{(33.8)}              \\
Halfcheetah-mr    & 1229.3 \footnotesize{(171.1)}           & \textbf{1725.6 \footnotesize{(111.8)}} & \textbf{1594.6 \footnotesize{(137.5)}}    \\
Halfcheetah-hr    & 1493.03 \footnotesize{(32.87)}          & \textbf{1699.8 \footnotesize{(32.8)}}  & \textbf{1765.71 \footnotesize{(150.63)}}  \\
\bottomrule
\end{tabular}}
\vspace{.35cm}
\caption{ Results of DT, ESPER, and our algorithm ARDT with and without conditioned on past adversarial tokens: the \textbf{Worst-case Return} against $8$ different online adversarial policies on MuJoCo \textit{Noisy Action Robust MDP} tasks across $5$ seeds. We tested with the same set of target returns and select the best target for each method. To expand the data coverage, we inject random trajectories collected by $0.1$-greedy online policies into the pre-collected online datasets with high robust returns. We use low (suffix -lr), medium (suffix -mr) and high randomness (suffix -hr) to indicate the proportion of random trajectories. \looseness=-1}
\label{table:mujoco}
\end{table}

\begin{figure}
    \centering
    \includegraphics[width=.32\textwidth]{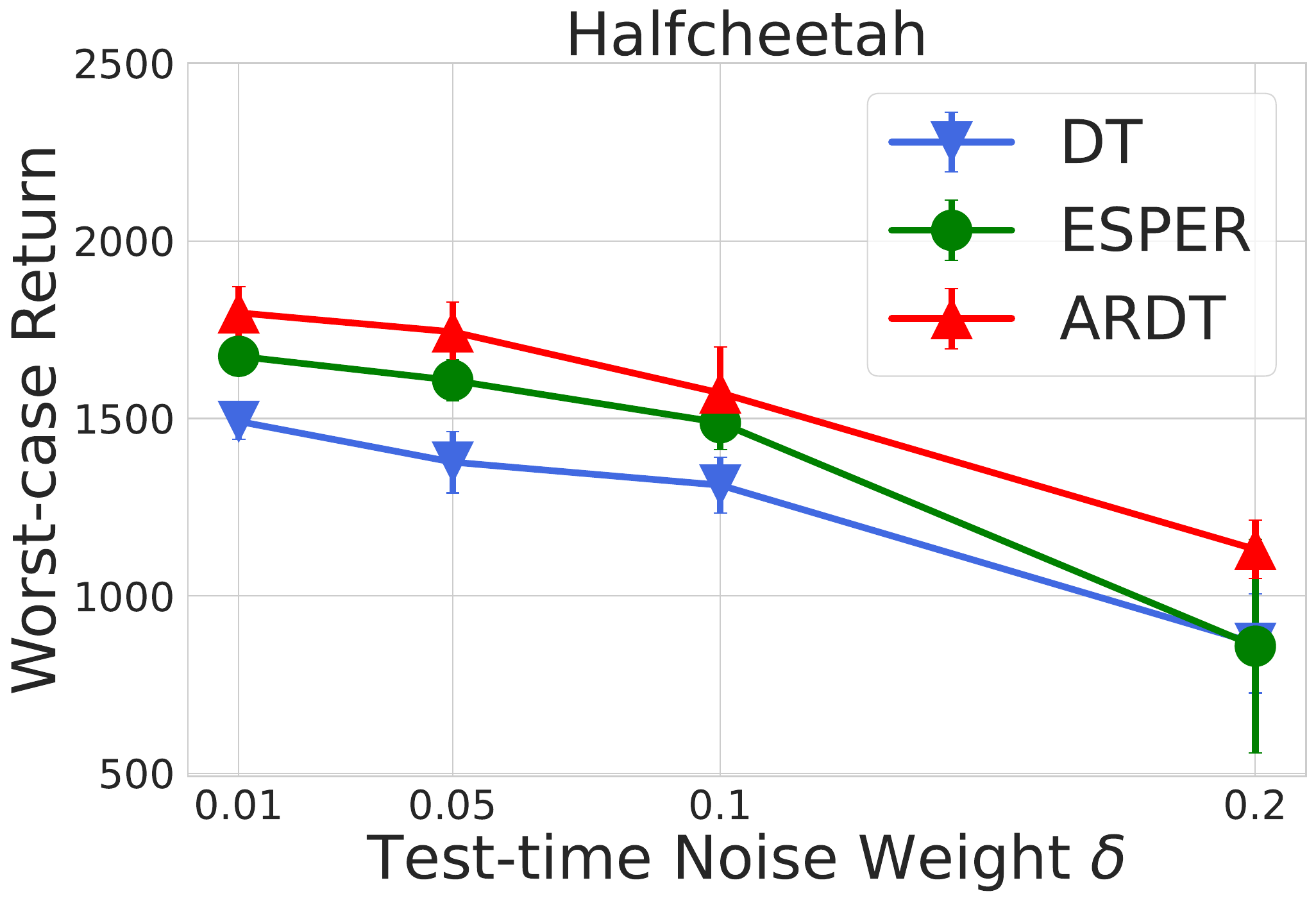}
    \includegraphics[width=.32\textwidth]{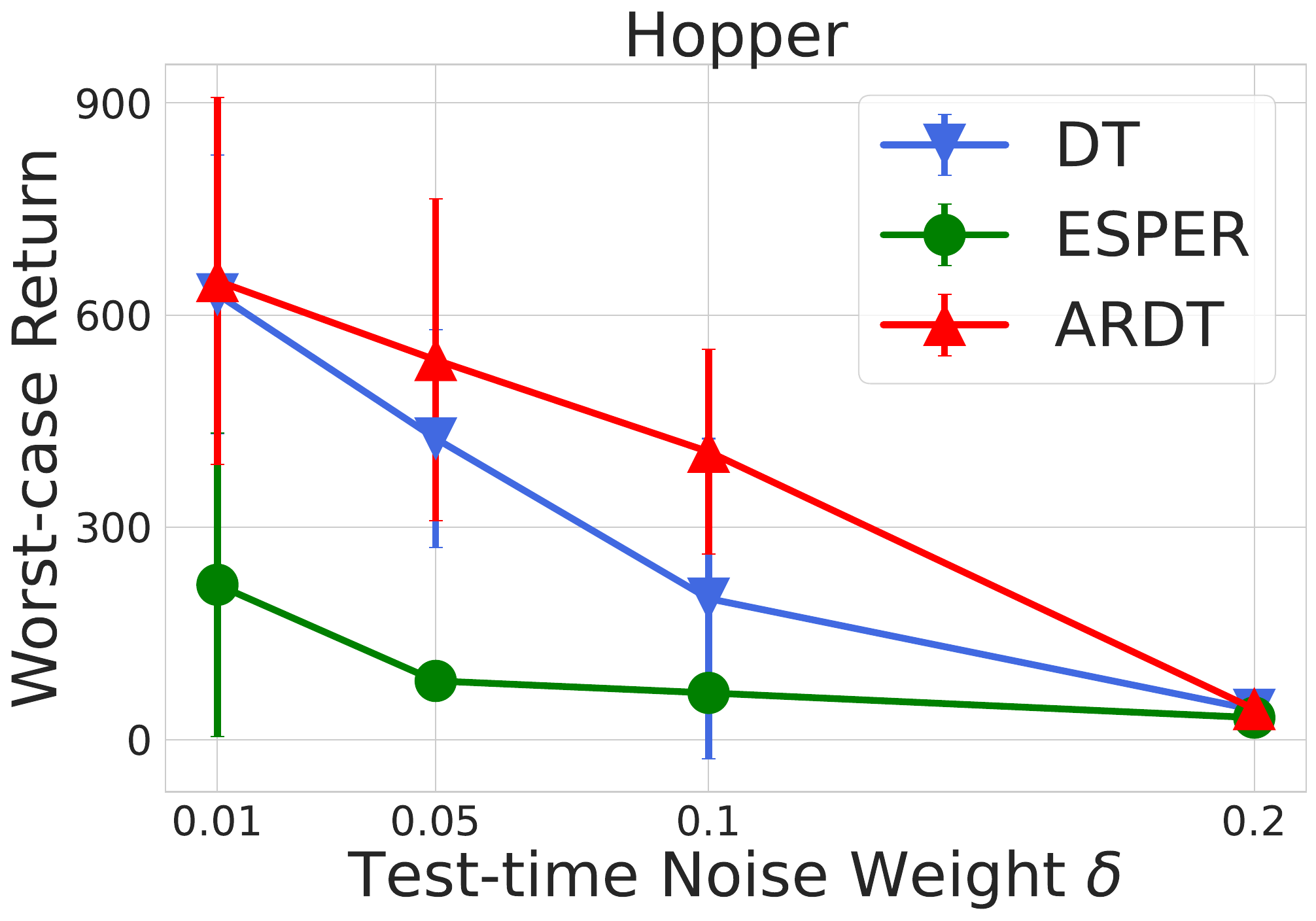}
    \includegraphics[width=.32\textwidth]{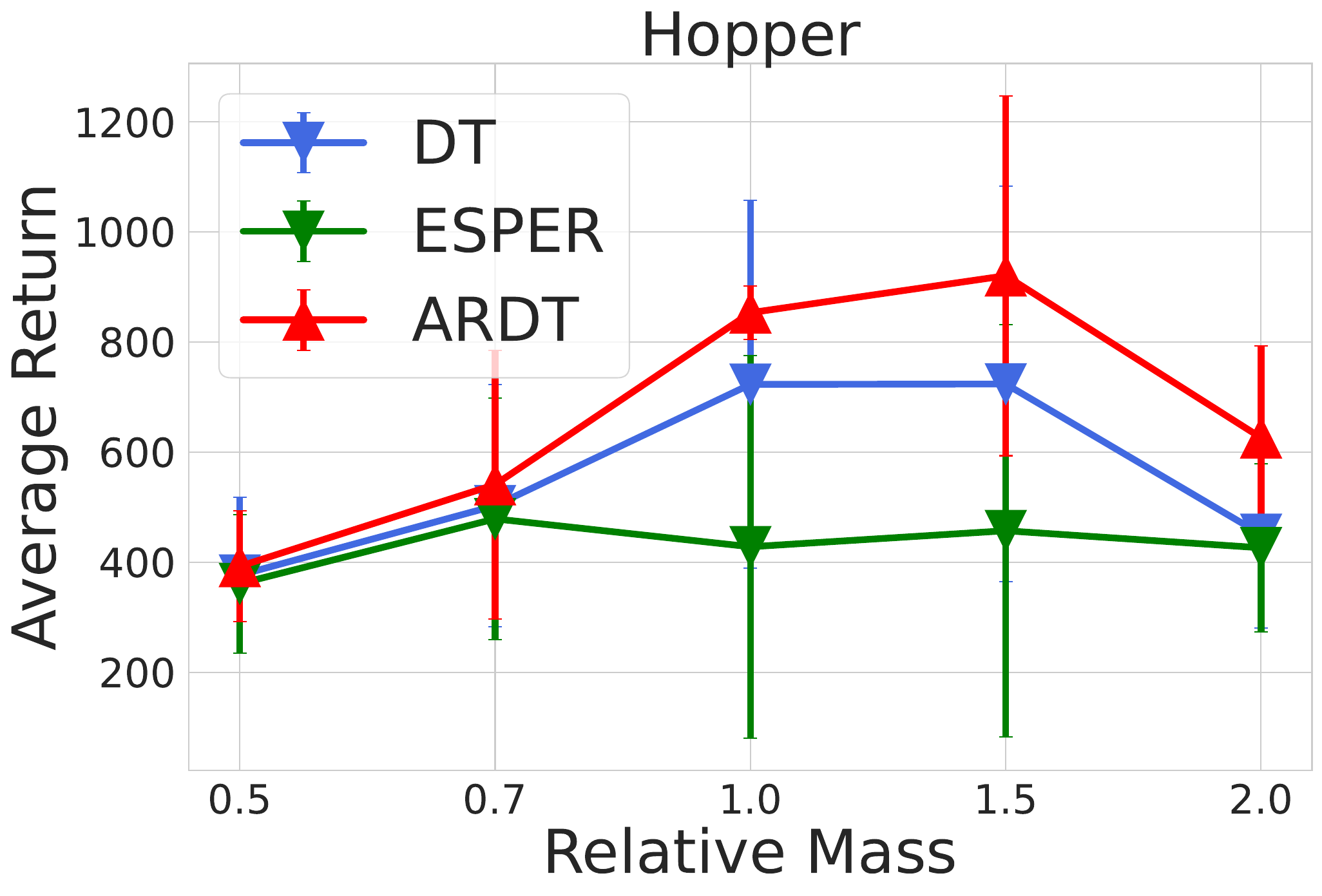}
    \caption{ From left to right, (1) the worst-case return under adversarial perturbations with different weights in Halfcheetah, (2) in Hopper, and (3) average returns of algorithms in environments with different relative mass. The initial target returns in the environments Halfcheetah and Hopper are $2000$ and $500$, respectively.\looseness=-1 }
    \label{fig:varyalpha}
    \vspace{1ex}
\end{figure}

    

In this section, we test the algorithms on a specific instance of Adversarial Reinforcement Learning problems, namely Noisy Action Robust MDP (NR-MDP), which was introduced by ~\citet{tessler2019action}. This includes three continuous control problems of OpenAI Gym MuJoCo, Hopper, Walker2D and Halfcheetah ~\citep{todorov2012mujoco} involving the adversarial action noise. In NR-MDP, the protagonist and adversary act jointly. Specifically, they exact a total force over one or more parts of the body of the agent in a Mujoco environment, corresponding to the weighted sum of their individual forces (adversary's weight $\delta$). Despite no longer being in the discrete setting, the return minimax expectile regression in ARDT is implemented with the raw continuous states and actions input, without any discretization. \looseness=-1 

To collect the data, we first trained Action Robust Reinforcement Learning (ARRL) online policies on NR-MDP tasks ~\citep{tessler2019action} to create the online protagonist policy and adversary policy for evaluation. Subsequently, the pre-collected data consist of the replayed trajectories removing the ones with low cumulative rewards. We collect the suboptimal trajectories with the $\epsilon$-greedy of the saved ARRL agents, i.e. online protagonist and adversary policies. We then mix them with the pre-collected datasets to create the final training datasets. The proportion of random trajectories to pre-collected trajectories are $0.01:1$, $0.1:1$ and $1:1$, respectively, for datasets categorised as being low, medium, and high randomness. At test time, we evaluate each of our trained ARDT protagonist policies against a population of $8$ adversarial online policies, each across $5$ seeds. We sweep over the same set of target returns for all algorithms, and pick the best results.

As shown in Table \ref{table:mujoco}, ARDT has overall better worst-case performance than DT and ESPER. In Hopper, ARDT has significantly higher worst-case return than both ESPER and DT. In Halfcheetah, ARDT has significantly higher worst-case return than DT and competitive results with ESPER. This is despite the much lower data coverage than in the previous settings due to the continuous environment. Low data coverage is in theory a greater limitation for ARDT given that it attempts to learn a single best worst-case outcome, rather than an expectation over outcomes. Even so, our proposed solution outperforms the baselines in most settings across all three environments. In addition, ARDT also has better performance when varying the adversary's weight $\delta$ and relative mass (Figure \ref{fig:varyalpha}), implying the robustness of ARDT to test-time environmental changes. Therefore, in the context of a continuous state and action space, and when facing a diverse population of adversaries or environmental changes, ARDT trained with minimax returns-to-go can still achieve superior robustness than our baselines. \looseness=-1

\subsection{Ablation Study}

Finally, we offer abalation study on the expectile level $\alpha$ in discrete environments. Specifically, we varied $\alpha$ for relabeling in Single-stage Game, Multi-stage Game and Connect Four. Then we tested against the optimal adversary in the first two environments and an 
$\epsilon$-greedy adversary ($\epsilon=0.5$) for the third environment.

According to the first two environments, we can confirm a pattern consistent with our theory that smaller alpha leads to more robustness. According to the Equation (6)-(9) in the paper, it should be that when the $\alpha$ is closer to 0, our estimate of minimax return is more accurate, and thus our algorithm is more robust. When the alpha is increased to 0.5, our method is converted to Expected-Return-Conditioned Decision Transformer (e.g. ESPER) since the expectile loss in this case reduces to Mean Square Error. This can be confirmed by comparing the results of Single-stage game and Multi-stage game in Figure \ref{fig:alpha} and \ref{fig:toy_results}.

In the third environment, Connect Four, the performance initially increases as 
decreases, but eventually drops. This is due to the weak stochastic adversary. When the value is too small, the algorithm becomes overly conservative. However, ARDT still outperforms DT significantly. Additionally, it also implies that we can tune the parameter alpha to adjust the level of conservativeness

\begin{figure}[t!]
    \centering
    \includegraphics[width=\linewidth]{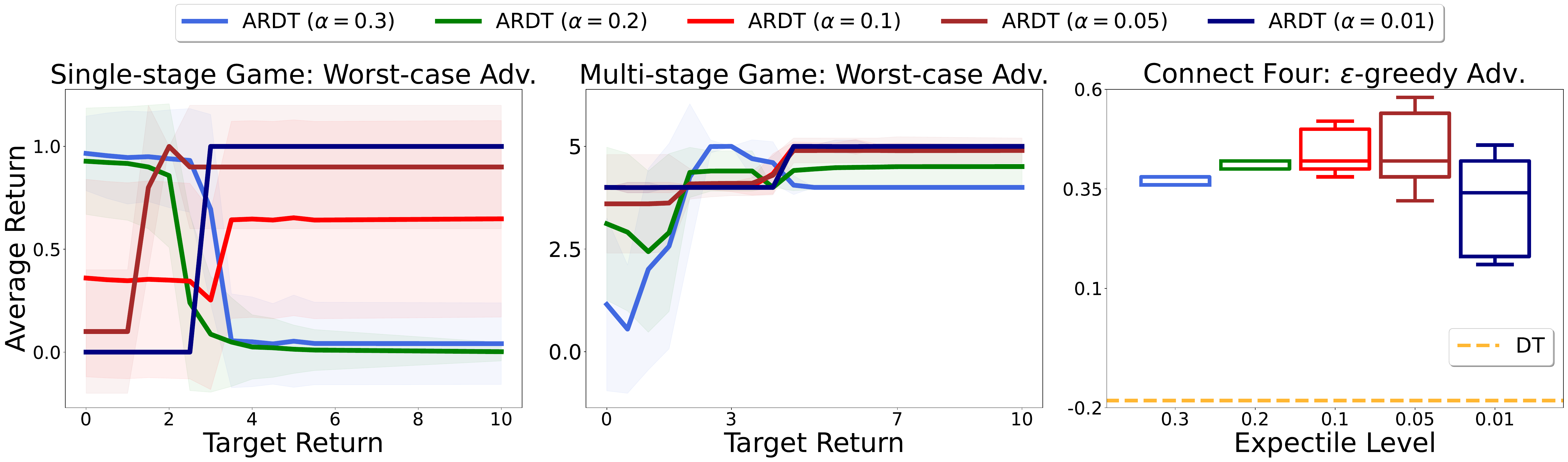}
    \caption{\textbf{Ablation study} on expectile level $\alpha$ over $10$ random seeds. In connect four, we set the target return to be $1$, and the adversary to be $\epsilon$-greedy ($\epsilon=0.5$), i.e. taking uniform random policy with probability $0.5$. On Connect Four, ARDT with $\alpha=0.01$ acts too conservative.}
    \label{fig:alpha}
\end{figure}

\section{Conclusions and Limitations}
\label{sec:conclusion}
This paper introduces a worst-case-aware training algorithm designed to improve the adversarial robustness of the Decision Transformer, namely Adversarially Robust Decision Transformer (ARDT). By relabeling trajectories with the estimated in-sample minimax returns-to-go through expectile regression, our algorithm is demonstrated to be robust against adversaries more powerful than the behavior ones, which existing DT methods cannot achieve. In experiments, our method consistently exhibits superior worst-case performance against adversarial attacks in both gaming environments and continuous control settings.\looseness=-1


\textbf{Limitations.~}  In our experiments, our practical algorithm estimates the minimax returns-to-go without accounting for stochasticity, as the transitions in all tested environments, as well as many real-world game settings, are deterministic when conditioned on the actions of both players. Future work should explore both stochastic and adversarial environments to further evaluate the performance of our proposed solution.\looseness=-1

\textbf{Broader Impact.~}  
Adversarial RvS holds potential for improving the robustness and security of autonomous agents in dynamic and potentially hostile environments. By training agents to withstand adversarial actions and unpredictable conditions, our work can improve the reliability and safety of technologies such as autonomous vehicles, cybersecurity defense systems, and robotic operations. 
\looseness=-1

\section{Acknowledgments}
Ilija Bogunovic was supported by the EPSRC New Investigator Award EP/X03917X/1; the Engineering
and Physical Sciences Research Council EP/S021566/1; and Google Research Scholar award. Xiaohang Tang was supported by the Engineering and Physical Sciences Research Council [grant number EP/T517793/1, EP/W524335/1]. The authors would like to thank the Department of Statistical Science, in particular Chakkapas Visavakul, for co-ordinating the computer resources, and the Department of Electronic and Electrical Engineering, and Department of Computer Science at University College London for providing the computer clusters.

\newpage
\bibliographystyle{plainnat}
\bibliography{example_paper}

\appendix
\newpage

\begin{center} 
\section*{\hfil Appendix \hfil}
\end{center}
\vspace{.3cm}

\section{Proofs}
\label{append:proofs}
\begin{theorem*} \ref{thm:adv-rvs-conditions}
Let $\pi_\mathcal{D}$ and $\bar{\pi}_\mathcal{D}$ be the data collecting policies used to gather data for training an RvS protagonist policy $\pi$. Assume $T(s_{t+1} \mid s_t, a_t, {\bar{a}_t}) = \rho^{\pi_\mathcal{D}, \bar{\pi}_\mathcal{D}}(s_{t+1} \mid \tau_{0:t-1}, s_t, a_t, {\bar{a}_t}, I(\tau) = z)$, for any goal $z$ such that $\rho^{\pi_\mathcal{D}, \bar{\pi}_\mathcal{D}}(I(\tau) = z \mid s_0) > 0$, the information distance can be minimized:  $\expectover{\tau \sim \rho^{\pi_z, \bar{\pi}_{\textnormal{test}}}}{D(I(\tau), z)} = 0$ if $\bar{\pi}_{\textnormal{test}}(\bar{a}_t \mid \tau_{0:t-1}, s_t, {a_t}) = \rho^{\pi_\mathcal{D}, \bar{\pi}_\mathcal{D}}(\bar{a}_t \mid \tau_{0:t-1}, s_t, {a_t}, I(\tau) = z)$.
\end{theorem*}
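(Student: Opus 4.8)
## Proof Proposal

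\textbf{Overall approach.} The plan is to show that under the stated hypotheses, executing $\pi_z$ against the specified test-time adversary $\bar{\pi}_{\textnormal{test}}$ induces \emph{exactly} the conditional trajectory distribution $\rho^{\pi_\mathcal{D}, \bar{\pi}_\mathcal{D}}(\tau \mid I(\tau) = z)$. Once the two trajectory distributions coincide, every trajectory drawn from $\rho^{\pi_z, \bar{\pi}_{\textnormal{test}}}$ satisfies $I(\tau) = z$ almost surely (because that is exactly the event we conditioned on), so $D(I(\tau), z) = 0$ pointwise and the expectation vanishes. Thus the crux is a factorization-matching argument: write out $\rho^{\pi_z, \bar{\pi}_{\textnormal{test}}}(\tau)$ as a product of initial-state, protagonist, adversary, and transition factors, and identify each factor with the corresponding factor of the conditional distribution $\rho^{\pi_\mathcal{D}, \bar{\pi}_\mathcal{D}}(\tau \mid I(\tau) = z)$.

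\textbf{Key steps in order.} First I would expand the target conditional distribution using the chain rule over time-steps: for each $t$, $\rho^{\pi_\mathcal{D}, \bar{\pi}_\mathcal{D}}(\tau \mid I(\tau) = z)$ factors into $\rho^{\pi_\mathcal{D}, \bar{\pi}_\mathcal{D}}(s_0 \mid I(\tau)=z)$ times the product over $t$ of the per-step protagonist term $\rho^{\pi_\mathcal{D}, \bar{\pi}_\mathcal{D}}(a_t \mid \tau_{0:t-1}, s_t, I(\tau)=z)$, the adversary term $\rho^{\pi_\mathcal{D}, \bar{\pi}_\mathcal{D}}(\bar{a}_t \mid \tau_{0:t-1}, s_t, a_t, I(\tau)=z)$, and the transition term $\rho^{\pi_\mathcal{D}, \bar{\pi}_\mathcal{D}}(s_{t+1} \mid \tau_{0:t-1}, s_t, a_t, \bar{a}_t, I(\tau)=z)$. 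Second, I would match factors term by term: the protagonist factor is $\pi_z(a_t \mid \tau_{0:t-1}, s_t, z)$ by the definition in Eq.~\eqref{eq:rvs_policy}; the adversary factor is $\bar{\pi}_{\textnormal{test}}(\bar{a}_t \mid \tau_{0:t-1}, s_t, a_t)$ by the theorem's hypothesis; and the transition factor equals the true transition kernel $T(s_{t+1} \mid s_t, a_t, \bar{a}_t)$ by the assumed condition in the theorem statement. For the initial-state factor, I would note that $p_0$ is assumed to be the initial-state distribution both in data collection and at test time, and conditioning on $I(\tau)=z$ given $s_0$ with the hypothesis $\rho^{\pi_\mathcal{D}, \bar{\pi}_\mathcal{D}}(I(\tau)=z\mid s_0) > 0$ keeps the support well-defined (one should restrict attention to the set of $s_0$ with positive probability of reaching the event, or equivalently argue this is automatic when $p_0$ has the same support). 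Third, I would assemble these identifications to conclude $\rho^{\pi_z, \bar{\pi}_{\textnormal{test}}}(\tau) = \rho^{\pi_\mathcal{D}, \bar{\pi}_\mathcal{D}}(\tau \mid I(\tau) = z)$, and finally deduce that $I(\tau) = z$ holds $\rho^{\pi_z, \bar{\pi}_{\textnormal{test}}}$-almost surely, giving $\expectover{\tau \sim \rho^{\pi_z, \bar{\pi}_{\textnormal{test}}}}{D(I(\tau), z)} = 0$.

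\textbf{Anticipated main obstacle.} The routine part is the algebra of matching factors; the subtle part is handling the conditioning events carefully so the argument is not circular. In particular, the transition factor $\rho^{\pi_\mathcal{D}, \bar{\pi}_\mathcal{D}}(s_{t+1} \mid \tau_{0:t-1}, s_t, a_t, \bar{a}_t, I(\tau)=z)$ involves conditioning on a future-dependent event $I(\tau)=z$, yet the theorem assumes precisely that this conditional equals the Markov transition $T(s_{t+1} \mid s_t, a_t, \bar{a}_t)$ — i.e.\ the future conditioning does not distort the one-step transition. I need to be careful that this assumption, together with the analogous statements packaged into the definitions of $\pi_z$ and $\bar{\pi}_{\textnormal{test}}$, genuinely covers \emph{every} factor in the chain-rule expansion and that I invoke the positivity condition $\rho^{\pi_\mathcal{D}, \bar{\pi}_\mathcal{D}}(I(\tau)=z \mid s_0) > 0$ exactly where a conditional probability would otherwise be ill-defined. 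A secondary point worth a sentence is the support-matching at $t=H$ (terminal step), where there is no $s_{t+1}$ factor; the induction/product simply terminates there, so no issue arises, but it should be mentioned for completeness.
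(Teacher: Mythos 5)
Your proposal is correct and follows essentially the same argument as the paper's proof: factor the rollout distribution of $(\pi_z,\bar{\pi}_{\textnormal{test}})$ per time-step, identify the protagonist, adversary, and transition factors with the corresponding conditionals of $\rho^{\pi_\mathcal{D},\bar{\pi}_\mathcal{D}}(\cdot \mid I(\tau)=z)$ via Eq.~\eqref{eq:rvs_policy}, the hypothesis on $\bar{\pi}_{\textnormal{test}}$, and the transition assumption, and conclude that the two trajectory distributions coincide so that $I(\tau)=z$ holds almost surely and the expected distance is zero. The only difference is cosmetic (you expand the conditional distribution and match factors, while the paper expands the rollout and substitutes), and your remark about the future-dependent conditioning is exactly the point the paper's assumption is designed to handle.
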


\begin{proof}

According to Eq. \eqref{eq:rvs_policy}: 
\begin{align}
\pi(a_t \mid \tau_{0:t-1}, {s_t}, z) = \rho^{\pi_\mathcal{D}, \bar{\pi}_\mathcal{D}}(a_t \mid \tau_{0:t-1}, {s_t}, I(\tau) = z).
\end{align}
If the sufficient condition is satisfied: $\bar{\pi}_{\textnormal{test}}(\bar{a}_t \mid \tau_{0:t-1}, s_t, {a_t}) = \rho^{\pi_\mathcal{D}, \bar{\pi}_\mathcal{D}}(\bar{a}_t \mid \tau_{0:t-1}, s_t, {a_t}, I(\tau) = z)$, combined with our assumption, we have 
\begin{align}
\bar{\pi}_{\textnormal{test}}(\bar{a}_t \mid \tau_{0:t-1}, s_t, {a_t}) \cdot T(s_{t+1} \mid s_t, {\bar{a}_t}) = \rho^{\pi_\mathcal{D}, \bar{\pi}_\mathcal{D}}(s_{t+1} \mid \tau_{0:t-1}, s_t, {a_t}, I(\tau) = z).
\label{eq:adv_transition}
\end{align}
Then, we have:
\begin{align*}
& \expectover{\tau \sim \rho^{\pi_z, \bar{\pi}_{\textnormal{test}}}(\cdot \mid s_0)}{D(I(\tau), z)} \\
~=~& \sum_{\tau} \rho^{\pi_z, \bar{\pi}_{\textnormal{test}}}(\tau \mid s_0) \cdot D(I(\tau), z) \\
~=~& \sum_{\tau} \prod_t{\pi(a_t \mid \tau_{0:t-1}, {s_t}, z) \cdot \bar{\pi}_{\textnormal{test}}(\bar{a}_t \mid \tau_{0:t-1}, s_t, {a_t}) \cdot T(s_{t+1} \mid s_t, a_t, {\bar{a}_t})} \cdot D(I(\tau), z) \\
~=~& \sum_{\tau} \prod_t{\rho^{\pi_\mathcal{D}, \bar{\pi}_\mathcal{D}}(a_t \mid \tau_{0:t-1}, {s_t}, I(\tau) = z) \cdot \rho^{\pi_\mathcal{D}, \bar{\pi}_\mathcal{D}}(s_{t+1} \mid \tau_{0:t-1}, s_t, {a_t}, I(\tau) = z)} \cdot D(I(\tau), z) \\
~=~& \sum_{\tau} \rho^{\pi_\mathcal{D}, \bar{\pi}_\mathcal{D}}(\tau \mid s_0, I(\tau) = z) \cdot D(I(\tau), z) \\
~=~& \expectover{\tau \sim \rho^{\pi_\mathcal{D}, \bar{\pi}_\mathcal{D}}(\cdot \mid s_0, I(\tau) = z)}{D(I(\tau), z)} \\
~=~& 0 . 
\end{align*}
The second equation satisfies by simply decomposing $\rho$ as in section \ref{sec:approach}. The third equation satisfies due to Eq. \eqref{eq:adv_transition} and Eq. \eqref{eq:rvs_policy}. The fourth equation satisfies by simply merging all $\rho$ terms. The final equation satisfies since the expectation in the last but one equation is over the probability distribution conditional on $I(\tau)=z$, leading to that for any sampled trajectory $\tau$, $I(\tau)=z$, and thus $D(I(\tau), z)$.
\end{proof}

\section{Environments}
\label{append:environment}

\paragraph{Multi-Stage Game}
Apart from single-stage game as motivating example, we also conduct experiments also on a multi-stage game in order to highlight the importance of multi-step minimax instead of a single-step minimum by adversary. This is because the optimal adversary will also consider the optimal protagonist strategy in the subsequent states. Considering a multi-step minimax return is more accurate to describe the value of states and sub-trajectories.

\begin{figure*}[h]
    \centering
    \includegraphics[width=.49\textwidth]{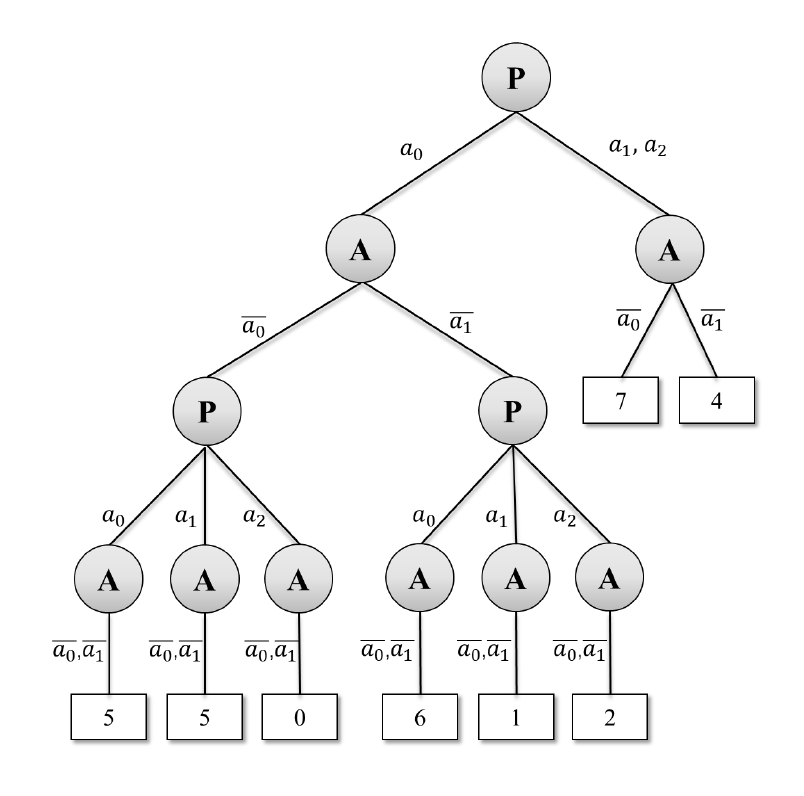}
    \includegraphics[width=.49\textwidth]{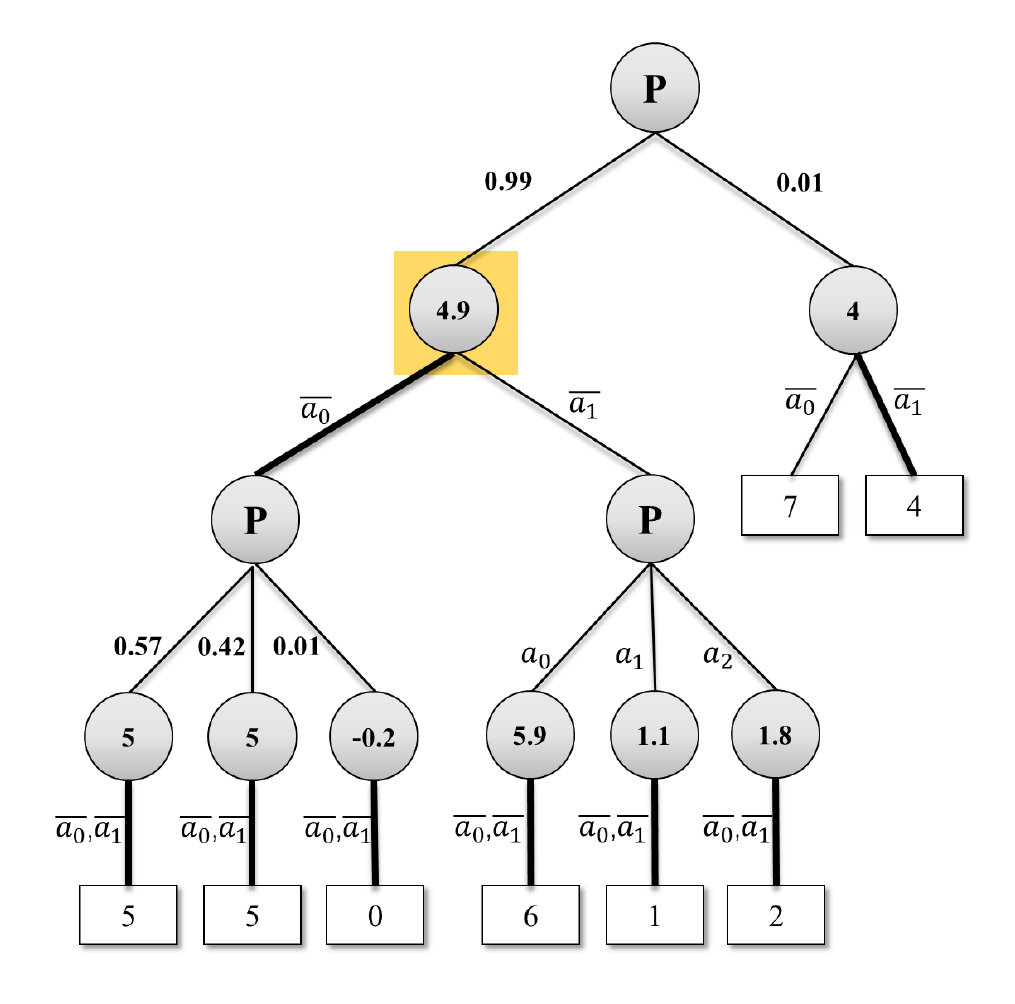}

    \caption{\textit{Multi-stage Game.} \textbf{LHS} is the tree representation of the problem, where \textbf{P} and \textbf{A} are decision maker (protagonist) and adversary, respectively. \textbf{RHS} is the same game tree where we attach the learned minimax return and the action probabilities of the well-trained ARDT of a single run when conditioning on the target return $7$  to the adversarial node and the branches, respectively. For example, the highlighted node has minimax return-to-go $4.9$. $0.99$ on the branch lead to the highlited node indicates the action probability of taking $a_0$ at the initial state. Thick lines represent the optimal adversarial actions.}
    \label{fig:ms_toy_examples}
\end{figure*}

\paragraph{Gambling}
Gambling ~\citep{paster2022you} is originally a simple stochastic bandit problem with three arms and horizon one. When taking the first arm, the reward will be either $5$ or $-15$ with $50\%$ probability each. Similarly, when taking the second arm, the reward will be $1$ or $-6$ with equal probabilities. The third arm has deterministic reward $1$. In this paper, we extend it to adversarial setting by considering an adversary controlling the outcomes, shown in Figure \ref{fig:gambling}.

\begin{figure}[t!]
    \centering
    \begin{minipage}{.55\textwidth}
    \centering
    \includegraphics[width=\textwidth]{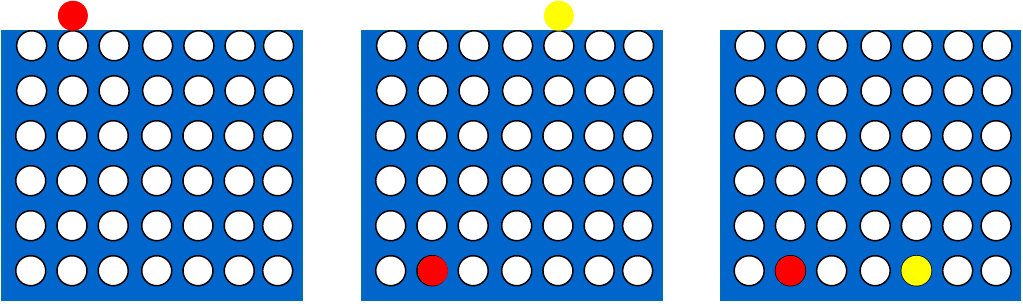}
    \caption{Connect Four game. Two players put pieces to specific columns in turns. The pieces will drop to the final empty row.}
    \label{fig:c4}
    \end{minipage}
    \hspace{.5cm}
    \begin{minipage}{.4\textwidth}
    \centering
    \includegraphics[width=\textwidth]{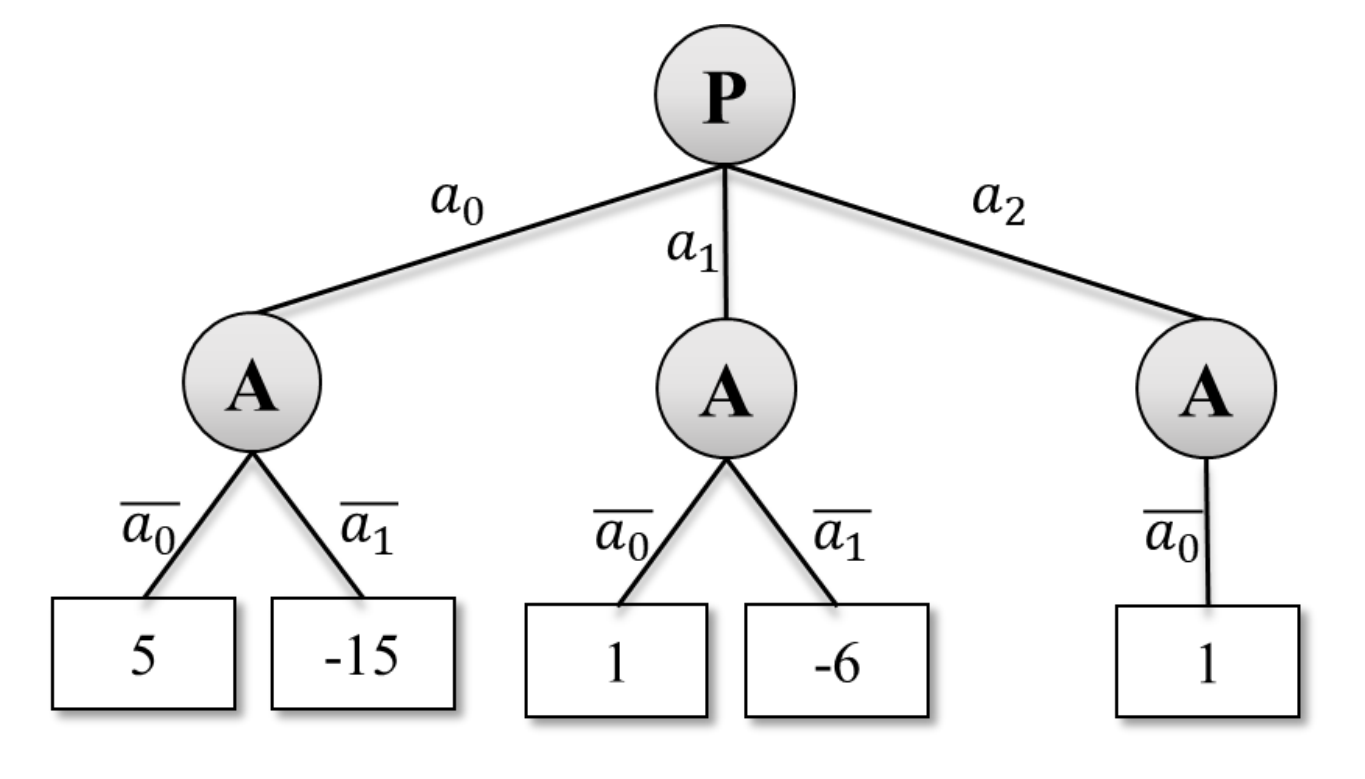}
    \caption{Two-player zero-sum game Gambling: \textbf{P}: protagonist, and \textbf{A}: adversary. The payoff at the leaves are for protagonist. \looseness=-1}
    \label{fig:gambling}
    \end{minipage}
    
\end{figure}

\paragraph{Connect Four}
Connect Four has a long horizon $22$ in the worst case. An example of the dynamics of this game is displayed in Figure \ref{fig:c4}. The outcomes only consist of win, tie, and lose, leading to the reward for the protagonist $1$, $0$, and $-1$. We leverage the solver in \url{https://github.com/PascalPons/connect4} to generate stochastic policy for data collection.

\section{Implementation Details}
\label{append:implement}

Our implementation is based on the implementation of Decision Transformer \citep{chen2021decision}, and ESPER \citep{paster2022you}. Our data collection  and part of the evaluation against online adversaries are based on the implementation in Noisy Action Robust Reinforcement Learning \citep{tessler2019action}.

The hyperparameters of the Minimax Expectile Regression are in Table \ref{table:hyper}. In practical version of ARDT's minimax regression, we adopt a leaf regularization to ensure the accuracy of predicting outcomes at terminal state (leaf node). We record our training iterations of minimax regression in the hyperparameters table. The number of training iterations in full coverage examples and Connect Four are larger than twice of the maximum trajectory length, which guarantees the minimax return propagated from the leaves to every node. According to the prediction in Figure \ref{fig:ms_toy_examples}, the predicted values are relatively accurate. While in MuJoCo environment, the trajectory length are too long for minimax regression to make full propagation, partial propagate already brings robustness based on the MuJoCo results in Section \ref{sec:exp_mujoco}.

Decision Transformer is trained with cross entroy loss in discrete environment and MSE in continuous environment, as in Decision Transformer \citep{chen2021decision}. During action inference at test time, all DT-based methods are prompted with a desired initial target return and state to generate the strategy $\pi$. Subsequently, the trajectory is generated in an auto-regressive manner, utilizing both the actions sampled by protagonist's strategy $\pi$ and the adversarial strategy $\bar{\pi}$, along with the transitions and rewards provided by the online environments, till the environment terminates or the maximum trajectory length is reached for evaluation. Notably, the target return will be subtracted by the observed immediate reward once received any from the environment. \looseness=-1

Notably, given that the rewards and states tokens encompass sufficient information about the adversary's action, we train the Transformer model with sequences without adversarial tokens, as it ensures that the model capacity remains consistent across algorithms. As usual, vanilla DT's is conditioned on the observed returns-to-go, which are contaminated by the adversarial actions, and states. Similarly, ESPER policy is conditional on the expected returns-to-go and states. Since the adversary only influences the transition and rewards, ESPER treats the adversary implicitly part of the environment who causes stochasticity, and addresses it with expected return as the information function. \looseness=-1

\begin{table}
\centering
\resizebox{\columnwidth}{!}{
\begin{tabular}{ccc} 
\toprule
Process                                       & Hyperparameters               & Values (Full coverage game/Connect Four/MuJoCo)  \\ 
\midrule
\multirow{8}{*}{Transformer training}         & Number of training steps      & 5000/100000/100000                               \\
  & Number of testing iterations  & 100                                              \\
  & Context length                & 4/20/22                                          \\
  & Learning rate                 & 0.0001                                           \\
  & Weight decay                  & 0.0001                                           \\
  & Warm up steps                 & 1000                                             \\
  & Drop out                      & 0.1                                              \\
  & Batch size                    & 128/128/512                                      \\ 
  & Optimizer               & AdamW                                             \\

\midrule
\multirow{7}{*}{Minimax expectile regression} & Number of training iterations & 6/44/50                                          \\
  & Learning rate                 & 0.001                                            \\
  & Weight decay                  & 0.0001                                           \\
  & Model                         & LSTM or MLP                                             \\
  & Batch size                    & 128/128/512                                      \\
  & Leaf weight                   & 0.9                                              \\
  & Expectile level               & 0.01                                             \\
  & Optimizer               & AdamW                                             \\

\bottomrule
\end{tabular}
}
\vspace{.3cm}
\caption{Hyperparameters of ARDT.}
\label{table:hyper}
\end{table}

\begin{figure}[t!]
    \centering
    \begin{minipage}{.55\textwidth}
    \centering
    \begin{tabular}{cc} 
    \toprule
    \textbf{Datasets} & \textbf{Returns of Behavior Policy}  \\ 
    \midrule
    Hopper-lr         & 942.8 \footnotesize{(417.4)}        \\
    Hopper-mr         & 870.5 \footnotesize{(467.4)}        \\
    Hopper-hr         & 505.4 \footnotesize{(532.2)}        \\ 
    \midrule
    Walker2D-lr       & 1330.2 \footnotesize{(654.2)}       \\
    Walker2D-mr       & 1277.4 \footnotesize{(653.9)}       \\
    Walker2D-hr       & 1013.1 \footnotesize{(581.1)}       \\ 
    \midrule
    Halfcheetah-lr    & 1516.4 \footnotesize{(215.8)}       \\
    Halfcheetah-mr    & 1395.1 \footnotesize{(461.2)}       \\
    Halfcheetah-hr    & 795.3 \footnotesize{(764.4)}        \\
    \bottomrule
    \end{tabular}
    \vspace{.3cm}
    \captionof{table}{Data profile of MuJoCo NR-MDP.}
    \label{table:mujoco}
    \end{minipage}
    \hspace{.5cm}
    \begin{minipage}{.4\textwidth}
    \centering
    \begin{tabular}{cc} 
    \toprule
    \textbf{Dataset}  & \textbf{Returns of Behavior Policy}                         \\ 
    \midrule
    (30, 10) & 0.49 \footnotesize{(0.87)}   \\
    (30, 30) & 0.13~\footnotesize{(0.99)}   \\
    (30, 50) & -0.23~\footnotesize{(0.97)}  \\
    (40, 10) & 0.56 \footnotesize{(0.83)}   \\
    (40, 30) & 0.24 \footnotesize{(0.97)}   \\
    (40, 50) & -0.13~\footnotesize{(0.99)}  \\
    (50, 10) & 0.61 \footnotesize{(0.79)}   \\
    (50, 30) & 0.32 \footnotesize{(0.95)}   \\
    (50, 50) & -0.02 \footnotesize{(1)}     \\
    \bottomrule
    \end{tabular}
    \captionof{table}{Data profile of Connect Four. The tuples represent the optimalities percentages of protagonist and adversary.}
    \label{table:c4_data}
    \end{minipage}
    
\end{figure}

\section{Additional Related Works}

\subsection{Algorithmically Related Works}
Tabular value-based works have been studied to solving zero-sum games offline by leveraging pessimism for out-of-sample estimation~\citep{cui2022offline,zhong2022pessimistic}. In contrast, we did not incorporate pessimism due to the difficulty of tuning its associated hyperparameter. We employ expectile regression to approximate the in-sample minimax return, inspired by the way of Implicit Q Learning \citep{kostrikov2021offline} to approximate the Bellman optimal $Q$-values leveraging expectile regression (ER). ER is efficient since it doesn't rely on querying the actions during minimax return estimation, thus reducing complexity. Learning value functions in this way has much less complexity to learn strategy from the data directly. Thus, the protocol of learning values and then extracting policy from them is reasonable. In addition, training Transformer based on relabeled returns-to-go is similar to do policy extraction from the learned values (returns-to-go) \citep{neumann2008fitted}. Transformer which demonstrates the strong reasoning in sequence modeling, should be exploited for improving Reinforcement Learning.

\subsection{Related Adversarial Attacks}
Within the broad setting of Robust Reinforcement Learning \citep{moos2022robust}, our experiments focus on action-robustness.We conducted experiments on Noisy Action Robust-MDP as a representative environment, following the previous works \citep{dong2023robust,tessler2019action,kamalaruban2020robust,pinto2017robust}. According to the different types of attacks proposed in \citep{mcmahan2024optimal,rangi2022understanding}, our adversarial perturbation can be viewed as Man In The Middle (MITM) attack formulation proposed in \citep{rangi2022understanding}, specifically actions attack, as noted in the last paragraph of the Related Work section of \citep{rangi2022understanding}. The actions attack we considered in our work can influence all elements in RL from the learner's perspective, including the next state observation, immediate reward, and dynamics, thus potentially causing all types of perturbation introduced in \citep{mcmahan2024optimal,rangi2022understanding}. Therefore, it is reasonable to select actions attack as a representative problem, although our architecture can be extended well to other formulations of robustness to the attacks.

\subsection{Related Robust RL Methods}
Robust Reinforcement Learning can be roughly categorized into addressing training-time robustness to poisoning attacks, and test-time robustness to evasion attacks \citep{wu2022copa}. Poisoning attacks are defined as the manipulation of elements in the training trajectory, including state, actions, and rewards in the data or online environment \citep{mcmahan2024optimal,rangi2022understanding}. The adversary in this paper is similar to the Man In The Middle (MITM) attack formulation in \citep{rangi2022understanding}. Robust IQL \citep{yang2023towards} and RL agent with survival instinct \citep{li2024survival} have demonstrated robustness to data poisoning in an offline setting. However, they only aim at the training-time robustness, while testing the algorithms without any attack. Our method is closer to ROLAH \citep{dong2023robust} and other related works \citep{tessler2019action,kamalaruban2020robust,pinto2017robust}, considering adversary appearing in both training-time and test-time serving evasion attacks. Despite the similarity,these methods are all online. Our method learns to achieve the robustness offline, which is more challenging due to potential distribution shift, and the lack of good behavior policy. In this paper, We didn't select offline robust RL as baseline since we only focus on the Reinforcement Learning via Supervised Learning (RvS) methods. We study the robustness of Decision Transformer (DT), and thus provide more analysis on DT-based algorithms including vanila DT and ESPER in various offline learning settings and environments.

\section{Data and Computing Resources}
\label{append:data_and_compute}

We publish our datasets along with the codebase via \url{https://github.com/xiaohangt/ardt}. There is no data access restrictions. The Mujoco data profiles are in Table \ref{table:c4_data} and \ref{table:mujoco}. The MuJoCo data has $1000$ number of trajectories, each with $1000$ steps of interactions. The Connect Four datasets have also $10^6$ number of steps of interaction in total, where each trajectory has a length at most  $22$.

We conduct experiments on GPUs: a GeForce RTX 2080 Ti with memory 11GB, and a NVIDIA A100 with memory 80GB. The memory for the entire computing is around 100GB when parallelizing the running over random seeds in Connect Four game, and MuJoCo.

\section{Additional Experimental Results}

ARDT can empirically address stochastic environments. Intuitively, this is because by tuning the expectile level 
, we can balance between expected return and minimax return, as discussed above, to achieve robustness against the distributional environmental changes.

\begin{table}[h]
\centering
\resizebox{\columnwidth}{!}{
\begin{tabular}{clllccc} 
\toprule
                  & \multicolumn{3}{c}{\textit{With Adversarial Token }}                                                                      & \multicolumn{3}{c}{\textit{Without Adversarial Token}}                                                                    \\
\textbf{Datasets} & \multicolumn{1}{c}{\textbf{\textbf{DT}}} & \textbf{\textbf{ESPER}}                & \textbf{\textbf{ARDT}}                & \textbf{DT}                          & \textbf{ESPER}                         & \textbf{ARDT}                             \\ 
\midrule
Hopper-lr         & 175.9 \footnotesize{(7.5)}               & 210.6 \footnotesize{(21.5)}            & \textbf{302.0 \footnotesize{(221.0)}} & 112.1 \footnotesize{(63.4)}          & 66.8 \footnotesize{(5.5)}              & \textbf{477.9 \footnotesize{(219.9)}}     \\
Hopper-mr         & 361.3 \footnotesize{(278.0)}             & 302.8 \footnotesize{(159.4)}           & \textbf{454.7 \footnotesize{(402.5)}} & 330.2 \footnotesize{(286.8)}         & 103.2 \footnotesize{(11.7)}            & \textbf{482.2 \footnotesize{(83.7)}}      \\
Hopper-hr         & 322.8 \footnotesize{(225.9)}             & 48.8 \footnotesize{(0.3)}              & \textbf{158.5 \footnotesize{(2.8)}}   & 298.62 \footnotesize{(319.72)}       & 63.0 \footnotesize{(11.5)}             & \textbf{331.69 \footnotesize{(187.18)}}   \\ 
\midrule
Walker2D-lr       & \textbf{475.4 \footnotesize{(84.8)}}     & 406.1 \footnotesize{(108.2)}           & \textbf{528.6 \footnotesize{(85.7)}}  & \textbf{429.6 \footnotesize{(31.9)}} & 380.6 \footnotesize{(156.9)}           & 405.6 \footnotesize{(39.8)}               \\
Walker2D-mr       & 505.1 \footnotesize{(39.5)}              & \textbf{455.2 \footnotesize{(47.0)}}   & \textbf{485.5 \footnotesize{(67.2)}}  & 391.0 \footnotesize{(20.6)}          & 426.9 \footnotesize{(29.7)}            & \textbf{508.4 \footnotesize{(61.8)}}      \\
Walker2D-hr       & 511.9 \footnotesize{(94.6)}              & \textbf{618.8 \footnotesize{(100.5)}}  & 475.2 \footnotesize{(84.2)}           & 413.22 \footnotesize{(54.69)}        & 502.6 \footnotesize{(31.3)}            & 492.63 \footnotesize{(109.28)}            \\ 
\midrule
Halfcheetah-lr    & 1170.6 \footnotesize{(94.9)}             & 1368.6 \footnotesize{(58.1)}           & 1323.2 \footnotesize{(76.8)}          & 1416.0 \footnotesize{(48.5)}         & \textbf{1691.8 \footnotesize{(63.0)}}  & 1509.2 \footnotesize{(33.8)}              \\
Halfcheetah-mr    & \textbf{1599.9 \footnotesize{(464.2)}}   & 1433.5 \footnotesize{(64.7)}           & 1299.9 \footnotesize{(115.0)}         & 1229.3 \footnotesize{(171.1)}        & \textbf{1725.6 \footnotesize{(111.8)}} & \textbf{1594.6 \footnotesize{(137.5)}}    \\
Halfcheetah-hr    & 1140.8 \footnotesize{(251.4)}            & \textbf{1558.5 \footnotesize{(345.5)}} & 974.3 \footnotesize{(52.3)}           & 1493.03 \footnotesize{(32.87)}       & \textbf{1699.8 \footnotesize{(32.8)}}  & \textbf{1765.71 \footnotesize{(150.63)}}  \\
\bottomrule
\end{tabular}
}
\vspace{.3cm}
\caption{Ablation study on training policy conditioning on adversary tokens with average Worst-case returns against $8$ adversaries including algorithms with and without past adversary tokens. ARDT trained without adversarial token has better performance in general. }
\end{table}

\begin{figure}[t!]
    \centering
    \includegraphics[width=\textwidth]{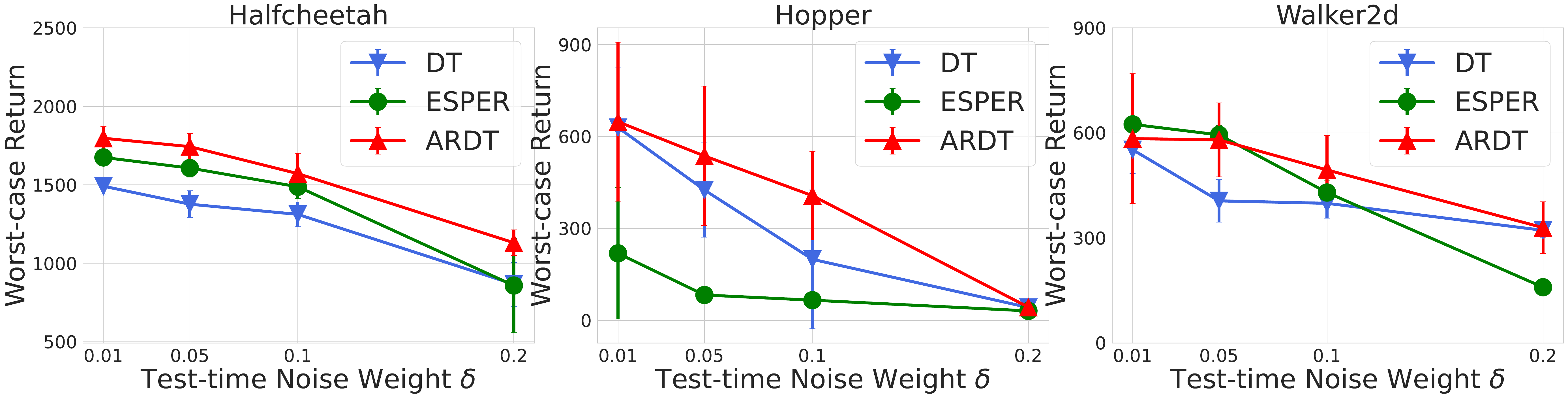}
    \caption{Average results in environments with different test-time weight of adversarial noise. The target return of the three environments are set to $2000$, $500$ and $800$.}
    \label{fig:all_delta}
\end{figure}

\begin{figure}[t!]
    \begin{subfigure}[b]{0.45\textwidth}
    \centering
    \includegraphics[width=\linewidth]{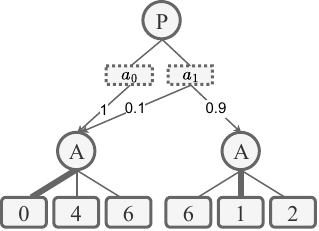}
    \caption{Stochastic Environment revised from Single-stage game, where the protagonist taking action $a_1$ might have probability $0.1$ to reach the left branch and $0.9$ to reach the right branch.}
    \label{fig:enter-label}
    \end{subfigure}
    \hfill
    \begin{subfigure}[b]{0.45\textwidth}
    \centering
    \begin{tabular}{cc} 
    \toprule
    \textbf{Algorithm} & \textbf{Average Return}              \\ 
    \midrule
    DT                 & 0.49 \footnotesize{(0.0)}            \\
    ESPER              & 0.11 \footnotesize{(0.2)}            \\
    ARDT (0.3)         & 0.23 \footnotesize{(0.2)}            \\
    ARDT (0.2)         & 0.82 \footnotesize{(0.08)}           \\
    ARDT (0.1)         & 0.89 \footnotesize{(0.01)}           \\
    ARDT (0.05)        & \textbf{0.91 \footnotesize{(0.04)}}  \\
    ARDT (0.01)        & 0.9 \footnotesize{(0.04)}            \\
    \bottomrule
    \end{tabular}
    \caption{Results of DT-based methods on Stochastic Game conditioned on large target return. We vary the expectile level alpha of ARDT and observe that ARDT with a well-tuned alpha can perform the best. \looseness=-1}
    \end{subfigure}
    \caption{Results on \textbf{stochastic environment}.}
\end{figure}

\begin{figure}[t!]
    \begin{subfigure}[b]{0.45\textwidth}
    \centering
    \includegraphics[width=\linewidth]{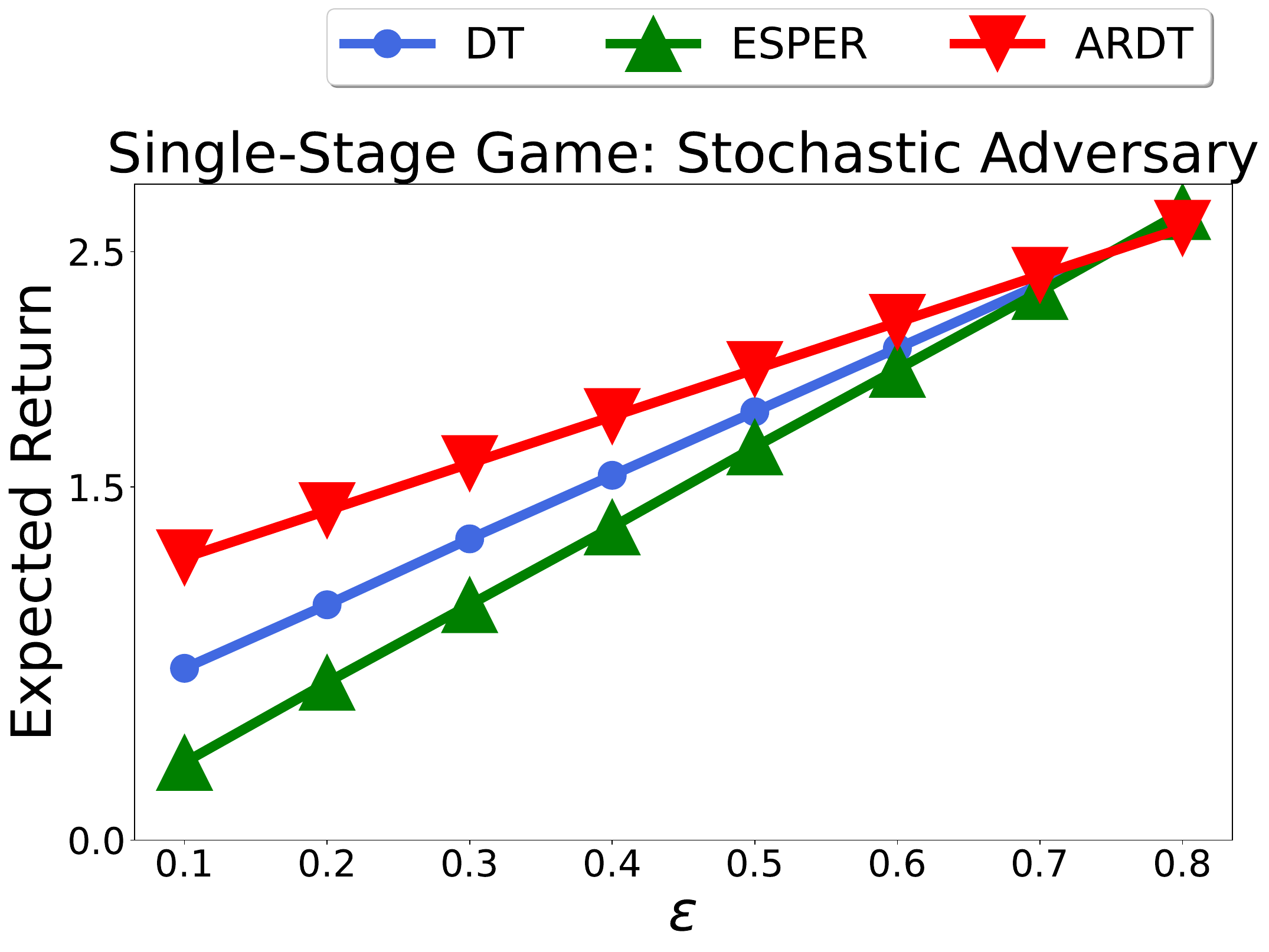}
    \caption{Expected return computed based on the policy of single random seed in Table $1$ in the paper.}
    \end{subfigure}
    \hfill
    \begin{subfigure}[b]{0.45\textwidth}
    \centering
    \begin{tabular}{cc} 
    \toprule
    Algorithm   & Average Return                       \\ 
    \midrule
    DT~         & 5.19 \footnotesize{(0.03)}           \\
    ESPER~      & 4.67 \footnotesize{(0.07)}           \\
    ARDT (0.3)  & 4.51 \footnotesize{(0.1)}            \\
    ARDT (0.2)  & 4.75 \footnotesize{(0.32)}           \\
    ARDT (0.1)  & 5.2 \footnotesize{(0.02)}            \\
    ARDT (0.05) & 5.19 \footnotesize{(0.02)}           \\
    ARDT (0.01) & \textbf{5.21 \footnotesize{(0.04)}}  \\
    \bottomrule
    \end{tabular}
    \caption{Average return conducted on Multi-stage Game, where the adversary has $\epsilon$-greedy policy ($\epsilon=0.2$). ARDT with a tuned expectile level achieved the best average return.}
    \end{subfigure} 
    \caption{Results against \textbf{stochastic adversary} ($\epsilon$-greedy policy).}
\end{figure}

\begin{figure}[h!]
    \centering
    \begin{tabular}{ccccc} 
    \toprule
    \textbf{Environment} & \textbf{Algorithm} & \textbf{Normal Return}                  & \textbf{Worst-case Return}              & \textbf{Return Drop}  \\ 
    \midrule
    Hopper-hr            & DT                 & \textbf{809.44 \footnotesize{(173.21)}} & 298.62 \footnotesize{(319.72)}          & 510.82                \\
    Hopper-hr            & ESPER              & 658.78 \footnotesize{(331.68)}          & 63.0 \footnotesize{(11.5)}              & 595.78                \\
    Hopper-hr            & ARDT               & 746.87 \footnotesize{(345.3)}           & \textbf{331.69 \footnotesize{(187.18)}} & \textbf{415.18}       \\
    \bottomrule
    \end{tabular}
    \captionof{table}{Methods tested in \textbf{normal setting}, i.e. without adversarial perturbation.}
\end{figure}

\section{Additional Broader Impact}
When deploying reinforcement learning (RL) methodologies in real-world, safety-critical applications, it is crucial to to account for the worst-case scenarios to prevent unexpected performance drops or safety risks. This approach ensures the system's robustness and reliability under a range of adverse conditions, effectively mitigating the potential for catastrophic failures.

One example is as discussed in the introduction of our paper, in the context of autonomous vehicles, considering worst-case scenarios is essential for safe navigation. The algorithm must be capable of handling sudden changes in road conditions, such as unexpected obstacles or extreme weather, as well as making safe decisions in complex interactive scenarios involving multiple (autonomous) vehicles. There are many other real-world control system utilizing RL requires safe and effective operations. In medical applications, such as surgical and assistive robots for patients treatment, preparing for worst-case scenarios—like unexpected drug interactions or patient non-compliance—enhances safety and efficacy. Similarly, in aerospace application, perturbations, such as turbulence, ensures safer aircraft control. Additionally, in the management of plasma states in nuclear fusion reactors using RL methods \citep{degrave2022magnetic}, it is critical to consider the worst-case outcomes of actions to maintain safe and stable operations.

\newpage
\section*{NeurIPS Paper Checklist}

\begin{enumerate}

\item {\bf Claims}
    \item[] Question: Do the main claims made in the abstract and introduction accurately reflect the paper's contributions and scope?
    \item[] Answer: \answerYes{} 
    \item[] Justification: Section \ref{sec:intro}
    \item[] Guidelines:
    \begin{itemize}
        \item The answer NA means that the abstract and introduction do not include the claims made in the paper.
        \item The abstract and/or introduction should clearly state the claims made, including the contributions made in the paper and important assumptions and limitations. A No or NA answer to this question will not be perceived well by the reviewers. 
        \item The claims made should match theoretical and experimental results, and reflect how much the results can be expected to generalize to other settings. 
        \item It is fine to include aspirational goals as motivation as long as it is clear that these goals are not attained by the paper. 
    \end{itemize}

\item {\bf Limitations}
    \item[] Question: Does the paper discuss the limitations of the work performed by the authors?
    \item[] Answer: \answerYes{} 
    \item[] Justification: We discuss the limitation in the final section \ref{sec:conclusion}. 
    \item[] Guidelines:
    \begin{itemize}
        \item The answer NA means that the paper has no limitation while the answer No means that the paper has limitations, but those are not discussed in the paper. 
        \item The authors are encouraged to create a separate "Limitations" section in their paper.
        \item The paper should point out any strong assumptions and how robust the results are to violations of these assumptions (e.g., independence assumptions, noiseless settings, model well-specification, asymptotic approximations only holding locally). The authors should reflect on how these assumptions might be violated in practice and what the implications would be.
        \item The authors should reflect on the scope of the claims made, e.g., if the approach was only tested on a few datasets or with a few runs. In general, empirical results often depend on implicit assumptions, which should be articulated.
        \item The authors should reflect on the factors that influence the performance of the approach. For example, a facial recognition algorithm may perform poorly when image resolution is low or images are taken in low lighting. Or a speech-to-text system might not be used reliably to provide closed captions for online lectures because it fails to handle technical jargon.
        \item The authors should discuss the computational efficiency of the proposed algorithms and how they scale with dataset size.
        \item If applicable, the authors should discuss possible limitations of their approach to address problems of privacy and fairness.
        \item While the authors might fear that complete honesty about limitations might be used by reviewers as grounds for rejection, a worse outcome might be that reviewers discover limitations that aren't acknowledged in the paper. The authors should use their best judgment and recognize that individual actions in favor of transparency play an important role in developing norms that preserve the integrity of the community. Reviewers will be specifically instructed to not penalize honesty concerning limitations.
    \end{itemize}

\item {\bf Theory Assumptions and Proofs}
    \item[] Question: For each theoretical result, does the paper provide the full set of assumptions and a complete (and correct) proof?
    \item[] Answer: \answerYes{} 
    \item[] Justification: We have a theorem in Section \ref{sec:rvs_in_adv}, and the proofs in Appendix \ref{append:proofs}.
    \item[] Guidelines:
    \begin{itemize}
        \item The answer NA means that the paper does not include theoretical results. 
        \item All the theorems, formulas, and proofs in the paper should be numbered and cross-referenced.
        \item All assumptions should be clearly stated or referenced in the statement of any theorems.
        \item The proofs can either appear in the main paper or the supplemental material, but if they appear in the supplemental material, the authors are encouraged to provide a short proof sketch to provide intuition. 
        \item Inversely, any informal proof provided in the core of the paper should be complemented by formal proofs provided in appendix or supplemental material.
        \item Theorems and Lemmas that the proof relies upon should be properly referenced. 
    \end{itemize}

    \item {\bf Experimental Result Reproducibility}
    \item[] Question: Does the paper fully disclose all the information needed to reproduce the main experimental results of the paper to the extent that it affects the main claims and/or conclusions of the paper (regardless of whether the code and data are provided or not)?
    \item[] Answer: \answerYes{} 
    \item[] Justification: We provide detailed demonstration on the algorithm in section \ref{sec:ardt} with pipeline diagram in Figure \ref{fig:ARDT}, as well as implementation details including hyperparameters in Appendix \ref{append:implement}.
    \item[] Guidelines:
    \begin{itemize}
        \item The answer NA means that the paper does not include experiments.
        \item If the paper includes experiments, a No answer to this question will not be perceived well by the reviewers: Making the paper reproducible is important, regardless of whether the code and data are provided or not.
        \item If the contribution is a dataset and/or model, the authors should describe the steps taken to make their results reproducible or verifiable. 
        \item Depending on the contribution, reproducibility can be accomplished in various ways. For example, if the contribution is a novel architecture, describing the architecture fully might suffice, or if the contribution is a specific model and empirical evaluation, it may be necessary to either make it possible for others to replicate the model with the same dataset, or provide access to the model. In general. releasing code and data is often one good way to accomplish this, but reproducibility can also be provided via detailed instructions for how to replicate the results, access to a hosted model (e.g., in the case of a large language model), releasing of a model checkpoint, or other means that are appropriate to the research performed.
        \item While NeurIPS does not require releasing code, the conference does require all submissions to provide some reasonable avenue for reproducibility, which may depend on the nature of the contribution. For example
        \begin{enumerate}
            \item If the contribution is primarily a new algorithm, the paper should make it clear how to reproduce that algorithm.
            \item If the contribution is primarily a new model architecture, the paper should describe the architecture clearly and fully.
            \item If the contribution is a new model (e.g., a large language model), then there should either be a way to access this model for reproducing the results or a way to reproduce the model (e.g., with an open-source dataset or instructions for how to construct the dataset).
            \item We recognize that reproducibility may be tricky in some cases, in which case authors are welcome to describe the particular way they provide for reproducibility. In the case of closed-source models, it may be that access to the model is limited in some way (e.g., to registered users), but it should be possible for other researchers to have some path to reproducing or verifying the results.
        \end{enumerate}
    \end{itemize}

\item {\bf Open access to data and code}
    \item[] Question: Does the paper provide open access to the data and code, with sufficient instructions to faithfully reproduce the main experimental results, as described in supplemental material?
    \item[] Answer: \answerYes{} 
    \item[] Justification: We provide the access to our dataset in Appendix \ref{append:data_and_compute}. The code is attached as the supplementary material.
    \item[] Guidelines:
    \begin{itemize}
        \item The answer NA means that paper does not include experiments requiring code.
        \item Please see the NeurIPS code and data submission guidelines (\url{https://nips.cc/public/guides/CodeSubmissionPolicy}) for more details.
        \item While we encourage the release of code and data, we understand that this might not be possible, so “No” is an acceptable answer. Papers cannot be rejected simply for not including code, unless this is central to the contribution (e.g., for a new open-source benchmark).
        \item The instructions should contain the exact command and environment needed to run to reproduce the results. See the NeurIPS code and data submission guidelines (\url{https://nips.cc/public/guides/CodeSubmissionPolicy}) for more details.
        \item The authors should provide instructions on data access and preparation, including how to access the raw data, preprocessed data, intermediate data, and generated data, etc.
        \item The authors should provide scripts to reproduce all experimental results for the new proposed method and baselines. If only a subset of experiments are reproducible, they should state which ones are omitted from the script and why.
        \item At submission time, to preserve anonymity, the authors should release anonymized versions (if applicable).
        \item Providing as much information as possible in supplemental material (appended to the paper) is recommended, but including URLs to data and code is permitted.
    \end{itemize}

\item {\bf Experimental Setting/Details}
    \item[] Question: Does the paper specify all the training and test details (e.g., data splits, hyperparameters, how they were chosen, type of optimizer, etc.) necessary to understand the results?
    \item[] Answer: \answerYes{} 
    \item[] Justification: We provide detailed explanation of the environmental settings in Appendix \ref{append:environment}, as well as the hyperparmeters in Appendix \ref{append:implement}.
    \item[] Guidelines:
    \begin{itemize}
        \item The answer NA means that the paper does not include experiments.
        \item The experimental setting should be presented in the core of the paper to a level of detail that is necessary to appreciate the results and make sense of them.
        \item The full details can be provided either with the code, in appendix, or as supplemental material.
    \end{itemize}

\item {\bf Experiment Statistical Significance}
    \item[] Question: Does the paper report error bars suitably and correctly defined or other appropriate information about the statistical significance of the experiments?
    \item[] Answer: \answerYes{} 
    \item[] Justification: Section \ref{sec:exp}.
    \item[] Guidelines:
    \begin{itemize}
        \item The answer NA means that the paper does not include experiments.
        \item The authors should answer "Yes" if the results are accompanied by error bars, confidence intervals, or statistical significance tests, at least for the experiments that support the main claims of the paper.
        \item The factors of variability that the error bars are capturing should be clearly stated (for example, train/test split, initialization, random drawing of some parameter, or overall run with given experimental conditions).
        \item The method for calculating the error bars should be explained (closed form formula, call to a library function, bootstrap, etc.)
        \item The assumptions made should be given (e.g., Normally distributed errors).
        \item It should be clear whether the error bar is the standard deviation or the standard error of the mean.
        \item It is OK to report 1-sigma error bars, but one should state it. The authors should preferably report a 2-sigma error bar than state that they have a 96\% CI, if the hypothesis of Normality of errors is not verified.
        \item For asymmetric distributions, the authors should be careful not to show in tables or figures symmetric error bars that would yield results that are out of range (e.g. negative error rates).
        \item If error bars are reported in tables or plots, The authors should explain in the text how they were calculated and reference the corresponding figures or tables in the text.
    \end{itemize}

\item {\bf Experiments Compute Resources}
    \item[] Question: For each experiment, does the paper provide sufficient information on the computer resources (type of compute workers, memory, time of execution) needed to reproduce the experiments?
    \item[] Answer: \answerYes{} 
    \item[] Justification: Appendix \ref{append:data_and_compute}.
    \item[] Guidelines:
    \begin{itemize}
        \item The answer NA means that the paper does not include experiments.
        \item The paper should indicate the type of compute workers CPU or GPU, internal cluster, or cloud provider, including relevant memory and storage.
        \item The paper should provide the amount of compute required for each of the individual experimental runs as well as estimate the total compute. 
        \item The paper should disclose whether the full research project required more compute than the experiments reported in the paper (e.g., preliminary or failed experiments that didn't make it into the paper). 
    \end{itemize}
    
\item {\bf Code Of Ethics}
    \item[] Question: Does the research conducted in the paper conform, in every respect, with the NeurIPS Code of Ethics \url{https://neurips.cc/public/EthicsGuidelines}?
    \item[] Answer: \answerYes{} 
    \item[] Justification: 
    \item[] Guidelines:
    \begin{itemize}
        \item The answer NA means that the authors have not reviewed the NeurIPS Code of Ethics.
        \item If the authors answer No, they should explain the special circumstances that require a deviation from the Code of Ethics.
        \item The authors should make sure to preserve anonymity (e.g., if there is a special consideration due to laws or regulations in their jurisdiction).
    \end{itemize}

\item {\bf Broader Impacts}
    \item[] Question: Does the paper discuss both potential positive societal impacts and negative societal impacts of the work performed?
    \item[] Answer: \answerYes{} 
    \item[] Justification: Section \ref{sec:conclusion}
    \item[] Guidelines:
    \begin{itemize}
        \item The answer NA means that there is no societal impact of the work performed.
        \item If the authors answer NA or No, they should explain why their work has no societal impact or why the paper does not address societal impact.
        \item Examples of negative societal impacts include potential malicious or unintended uses (e.g., disinformation, generating fake profiles, surveillance), fairness considerations (e.g., deployment of technologies that could make decisions that unfairly impact specific groups), privacy considerations, and security considerations.
        \item The conference expects that many papers will be foundational research and not tied to particular applications, let alone deployments. However, if there is a direct path to any negative applications, the authors should point it out. For example, it is legitimate to point out that an improvement in the quality of generative models could be used to generate deepfakes for disinformation. On the other hand, it is not needed to point out that a generic algorithm for optimizing neural networks could enable people to train models that generate Deepfakes faster.
        \item The authors should consider possible harms that could arise when the technology is being used as intended and functioning correctly, harms that could arise when the technology is being used as intended but gives incorrect results, and harms following from (intentional or unintentional) misuse of the technology.
        \item If there are negative societal impacts, the authors could also discuss possible mitigation strategies (e.g., gated release of models, providing defenses in addition to attacks, mechanisms for monitoring misuse, mechanisms to monitor how a system learns from feedback over time, improving the efficiency and accessibility of ML).
    \end{itemize}
    
\item {\bf Safeguards}
    \item[] Question: Does the paper describe safeguards that have been put in place for responsible release of data or models that have a high risk for misuse (e.g., pretrained language models, image generators, or scraped datasets)?
    \item[] Answer: \answerNA{} 
    \item[] Justification: \answerNA{}
    \item[] Guidelines:
    \begin{itemize}
        \item The answer NA means that the paper poses no such risks.
        \item Released models that have a high risk for misuse or dual-use should be released with necessary safeguards to allow for controlled use of the model, for example by requiring that users adhere to usage guidelines or restrictions to access the model or implementing safety filters. 
        \item Datasets that have been scraped from the Internet could pose safety risks. The authors should describe how they avoided releasing unsafe images.
        \item We recognize that providing effective safeguards is challenging, and many papers do not require this, but we encourage authors to take this into account and make a best faith effort.
    \end{itemize}

\item {\bf Licenses for existing assets}
    \item[] Question: Are the creators or original owners of assets (e.g., code, data, models), used in the paper, properly credited and are the license and terms of use explicitly mentioned and properly respected?
    \item[] Answer: \answerYes{} 
    \item[] Justification: Our implementation is based on some other codebases, which are cited in section \ref{sec:exp} and Appendix \ref{append:implement}. 
    \item[] Guidelines:
    \begin{itemize}
        \item The answer NA means that the paper does not use existing assets.
        \item The authors should cite the original paper that produced the code package or dataset.
        \item The authors should state which version of the asset is used and, if possible, include a URL.
        \item The name of the license (e.g., CC-BY 4.0) should be included for each asset.
        \item For scraped data from a particular source (e.g., website), the copyright and terms of service of that source should be provided.
        \item If assets are released, the license, copyright information, and terms of use in the package should be provided. For popular datasets, \url{paperswithcode.com/datasets} has curated licenses for some datasets. Their licensing guide can help determine the license of a dataset.
        \item For existing datasets that are re-packaged, both the original license and the license of the derived asset (if it has changed) should be provided.
        \item If this information is not available online, the authors are encouraged to reach out to the asset's creators.
    \end{itemize}

\item {\bf New Assets}
    \item[] Question: Are new assets introduced in the paper well documented and is the documentation provided alongside the assets?
    \item[] Answer: \answerYes{} 
    \item[] Justification: The new dataset in Appendix \ref{append:data_and_compute}.
    \item[] Guidelines:
    \begin{itemize}
        \item The answer NA means that the paper does not release new assets.
        \item Researchers should communicate the details of the dataset/code/model as part of their submissions via structured templates. This includes details about training, license, limitations, etc. 
        \item The paper should discuss whether and how consent was obtained from people whose asset is used.
        \item At submission time, remember to anonymize your assets (if applicable). You can either create an anonymized URL or include an anonymized zip file.
    \end{itemize}

\item {\bf Crowdsourcing and Research with Human Subjects}
    \item[] Question: For crowdsourcing experiments and research with human subjects, does the paper include the full text of instructions given to participants and screenshots, if applicable, as well as details about compensation (if any)? 
    \item[] Answer: \answerNA{} 
    \item[] Justification: \answerNA{}
    \item[] Guidelines:
    \begin{itemize}
        \item The answer NA means that the paper does not involve crowdsourcing nor research with human subjects.
        \item Including this information in the supplemental material is fine, but if the main contribution of the paper involves human subjects, then as much detail as possible should be included in the main paper. 
        \item According to the NeurIPS Code of Ethics, workers involved in data collection, curation, or other labor should be paid at least the minimum wage in the country of the data collector. 
    \end{itemize}

\item {\bf Institutional Review Board (IRB) Approvals or Equivalent for Research with Human Subjects}
    \item[] Question: Does the paper describe potential risks incurred by study participants, whether such risks were disclosed to the subjects, and whether Institutional Review Board (IRB) approvals (or an equivalent approval/review based on the requirements of your country or institution) were obtained?
    \item[] Answer: \answerNA{} 
    \item[] Justification:  \answerNA{}
    \item[] Guidelines:
    \begin{itemize}
        \item The answer NA means that the paper does not involve crowdsourcing nor research with human subjects.
        \item Depending on the country in which research is conducted, IRB approval (or equivalent) may be required for any human subjects research. If you obtained IRB approval, you should clearly state this in the paper. 
        \item We recognize that the procedures for this may vary significantly between institutions and locations, and we expect authors to adhere to the NeurIPS Code of Ethics and the guidelines for their institution. 
        \item For initial submissions, do not include any information that would break anonymity (if applicable), such as the institution conducting the review.
    \end{itemize}

\end{enumerate}

\end{document}